\def\eqref#1{equation~\ref{#1}}
\def\1{\bm{1}}
\DeclareMathAlphabet{\mathsfit}{\encodingdefault}{\sfdefault}{m}{sl}
\SetMathAlphabet{\mathsfit}{bold}{\encodingdefault}{\sfdefault}{bx}{n}
\definecolor{applegreen}{rgb}{0.3, 0.7, 0.3}
\newcommand\StartAppendixEntries{}
  \renewcommand\StartAppendixEntries{\value{tocdepth}=-10000\relax}%
  \edef\maintocdepth{\the\value{tocdepth}}%
  \renewcommand\StartAppendixEntries{\value{tocdepth}=\maintocdepth\relax}%
\newcommand*\appendixwithtoc{%
  \cleardoublepage
  \appendix
  \addtocontents{toc}{\protect\StartAppendixEntries}
  \listofatoc
}
\newtheorem{theorem}{Theorem}[section]
\newtheorem{lemma}{Lemma}[theorem]
\newcommand{\cmark}{\ding{51}}%
\newcommand{\xmark}{\ding{55}}%
\title{MALI: A memory efficient and reverse accurate integrator for Neural ODEs}
\author{Juntang Zhuang; Nicha C. Dvornek; Sekhar Tatikonda; James S. Duncan \\
\texttt{\{j.zhuang; nicha.dvornek; sekhar.tatikonda; james.duncan\} @yale.edu} \\
Yale University, New Haven, CT, USA
}
\begin{document}

\maketitle

\begin{abstract}
Neural ordinary differential equations (Neural ODEs) are a new family of deep-learning models with continuous depth. 
%Despite its success in irregularly-sampled time-series modeling and continuous generative models, 
However, the numerical estimation of the gradient in the continuous case is not well solved: existing implementations of the adjoint method %achieves constant memory cost but 
suffer from inaccuracy in reverse-time trajectory, while the naive method and the adaptive checkpoint adjoint method (ACA) %guarantee reverse-time accuracy but 
have a memory cost that grows with integration time. In this project, based on the asynchronous leapfrog (ALF) solver, we propose the Memory-efficient ALF Integrator (MALI), which has a constant memory cost \textit{w.r.t} number of solver steps in integration similar to the adjoint method, and guarantees accuracy in reverse-time trajectory (hence accuracy in gradient estimation). We validate MALI in various tasks: on image recognition tasks, to our knowledge, MALI is the first to enable feasible training of a Neural ODE on ImageNet and outperform a well-tuned ResNet, 
%(with the same number of parameters) in both accuracy and adversarial robustness
while existing methods fail due to either heavy memory burden or inaccuracy; for time series modeling, MALI significantly outperforms the adjoint method; and for continuous generative models, MALI achieves new state-of-the-art performance.% on MNIST and Cifar10 datasets. 
Code is available at https://github.com/juntang-zhuang/TorchDiffEqPack
\end{abstract}

\section{Introduction}
Recent research builds the connection between continuous models and neural networks. The theory of dynamical systems has been applied to analyze the properties of neural networks or guide the design of networks \citep{weinan2017proposal, ruthotto2019deep, lu2018beyond}. In these works, a residual block \citep{he2016deep} is typically viewed as a one-step Euler discretization of an ODE; instead of directly analyzing the discretized neural network, it might be easier to analyze the ODE. 

Another direction is the neural ordinary differential equation (Neural ODE) \citep{chen2018neural}, which takes a continuous depth instead of discretized depth. The dynamics of a Neural ODE is typically approximated by numerical integration with adaptive ODE solvers. Neural ODEs have been applied in irregularly sampled time-series \citep{rubanova2019latent}, free-form continuous generative models \citep{grathwohl2018ffjord, finlay2020train}, mean-field games \citep{ruthotto2020machine}, stochastic differential equations \citep{li2020scalable} and physically informed modeling \citep{sanchez2019hamiltonian, zhong2019symplectic}.

Though the Neural ODE has been widely applied in practice, how to train it is not extensively studied. The \textit{naive method} directly backpropagates through an ODE solver, but tracking a continuous trajectory requires a huge memory. \citet{chen2018neural} proposed to use the \textit{adjoint method} to determine the gradient in continuous cases, which achieves constant memory cost \textit{w.r.t} integration time; however, as pointed out by \citet{zhuang2020adaptive}, the adjoint method suffers from numerical errors due to the inaccuracy in reverse-time trajectory. \citet{zhuang2020adaptive} proposed the \textit{adaptive checkpoint adjoint (ACA)} method to achieve accuracy in gradient estimation at a much smaller memory cost compared to the naive method, yet the memory consumption of ACA still grows linearly with integration time. Due to the non-constant memory cost, neither ACA nor naive method are suitable for large scale datasets (e.g. ImageNet) or high-dimensional Neural ODEs (e.g. FFJORD \citep{grathwohl2018ffjord}).

In this project, we propose the Memory-efficient Asynchronous Leapfrog Integrator (MALI) to achieve advantages of both the adjoint method and ACA: constant memory cost \textit{w.r.t} integration time and accuracy in reverse-time trajectory. MALI is based on the asynchronous leapfrog (ALF) integrator \citep{mutze2013asynchronous}. With the ALF integrator, each numerical step forward in time is reversible. Therefore, with MALI, we delete the trajectory and only keep the end-time states, hence achieve constant memory cost \textit{w.r.t} integration time; using the reversibility, we can accurately reconstruct the trajectory from the end-time value, hence achieve accuracy in gradient. Our contributions are:
\begin{enumerate}[leftmargin=*]
\item We propose a new method (MALI) to solve Neural ODEs, which achieves constant memory cost \textit{w.r.t} number of solver steps in integration and accuracy in gradient estimation. We provide theoretical analysis.% on its properties.
\item We validate our method with extensive experiments: (a) for image classification tasks, MALI enables a Neural ODE to achieve better accuracy than a well-tuned ResNet with the same number of parameters; to our knowledge, MALI is the first method to enable training of Neural ODEs on a large-scale dataset such as ImageNet, while existing methods fail due to either heavy memory burden or inaccuracy. (b) In time-series modeling, MALI achieves comparable or better results than other methods. (c) For generative modeling, a FFJORD model trained with MALI achieves new state-of-the-art results on MNIST and Cifar10.
\end{enumerate}
\section{Preliminaries}
\subsection{Numerical Integration Methods}
An ordinary differential equation (ODE) typically takes the form
\begin{equation}
\label{eq:NODE}
    \frac{\mathrm{d} z(t)}{\mathrm{d} t} = f_\theta (t, z(t))\ \ \ \ s.t.\ \ \ \ z(t_0)=x,\ t \in [t_0,T],\ \ \ \ Loss = L(z(T),y)
\end{equation}
where $z(t)$ is the hidden state evolving with time, $T$ is the end time, $t_0$ is the start time (typically 0), $x$ is the initial state. The derivative of $z(t)$ w.r.t $t$ is defined by a function $f$, and $f$ is defined as a sequence of layers parameterized by $\theta$. The loss function is $L(z(T),y)$, where $y$ is the target variable. Eq.~\ref{eq:NODE} is called the initial value problem (IVP) because only $z(t_0)$ is specified.
\begin{wrapfigure}{r}{0.45\textwidth}
\begin{minipage}{\linewidth}
\scalebox{0.92}{
\begin{algorithm}[H]
\label{alg:int}
\SetAlgorithmName{Algorithm}{} \\
%\TitleOfAlgo { \textbf{Adam Optimizer}} \\
\textbf{Input} initial state $x$, start time $t_0$, end time $T$, error tolerance $etol$, initial stepsize $h$.\\
\textbf{Initialize} $z(0)=x, t=t_0$\\
\textbf{While} $t < T $ \\
\hspace{8mm} $error\_est=\infty$ \\
\hspace{8mm} \textbf{While} $error\_est > etol$ \\
\hspace{16mm} $h \leftarrow h \times DecayFactor$ \\
\hspace{16mm} $\hat{z}, \ error\_est = \psi_h(t,z)$ \\
\hspace{8mm} \textbf{If} $error\_est < etol$\\ 
\hspace{16mm} $h \leftarrow h \times IncreaseFactor$ \\
\hspace{8mm} $t \leftarrow t+h, \ \ z \leftarrow \hat{z}$
\caption{Numerical Integration}
\end{algorithm}
}
\end{minipage}
\vspace{-4mm}
\end{wrapfigure} \\
\textbf{Notations} \ \ We summarize the notations following \citet{zhuang2020adaptive}.
\begin{itemize}[topsep=0pt,parsep=0pt,partopsep=0pt,leftmargin=*]
\item $z_i(t_i) / \overline{z}(\tau_i)$: hidden state in forward/reverse time trajectory at time $t_i/\tau_i$.
\item $\psi_{h}(t_i,z_i)$: the \textit{numerical} solution at time $t_i+h$, starting from $(t_i,z_i)$ with a stepsize $h$.
\item $N_f, N_z$: $N_f$ is the number of layers in $f$ in Eq.~\ref{eq:NODE}, $N_z$ is the dimension of $z$.
\item $N_t / N_r$: number of discretized points (outer iterations in Algo.~\ref{alg:int}) in forward / reverse integration. %It varies with input and error tolerance for adaptive solvers.
\item $m$: average number of inner iterations in Algo.~\ref{alg:int} to find an acceptable stepsize.
\end{itemize} 
\textbf{Numerical Integration} \ \ The algorithm for general adaptive-stepsize numerical ODE solvers is summarized in Algo.~\ref{alg:int} \citep{wanner1996solving}. The solver repeatedly advances in time by a step, which is the outer loop in Algo.~\ref{alg:int} (blue curve in Fig.~\ref{fig:forward}). For each step, the solver decreases the stepsize until the estimate of error
%(under current stepsize) 
is lower than the tolerance, which is the inner loop in Algo.~\ref{alg:int} (green curve in Fig.~\ref{fig:forward}). For fixed-stepsize solvers, the inner loop is replaced with a single evaluation of $\psi_h(t,z)$ using predefined stepsize $h$. Different methods typically use different $\psi$, for example different orders of the Runge-Kutta method \citep{runge1895numerische}. 
\subsection{Analytical form of gradient in continuous case}
We first briefly introduce the analytical form of the gradient in the continuous case, then we compare different numerical implementations in the literature to estimate the gradient. The analytical form of the gradient in the continuous case is
\begin{equation}
    \frac{\mathrm{d}L}{\mathrm{d}\theta} = - \int_T^0 a(t)^\top \frac{\partial f(z(t),t,\theta)}{\partial \theta}dt
\label{eq:analytic_grad}
\end{equation}
\begin{equation}
    \frac{\mathrm{d}a(t)}{dt} +  \Big ( \frac{\partial f(z(t),t,\theta)}{\partial z(t)} \Big )^\top a(t) = 0\ \ \forall t\in (0,T),\ \ \ a(T) = \frac{\partial L}{\partial z(T)}
    \label{eq:lambda_ode}
\end{equation}
where $a(t)$ is the ``adjoint state''. Detailed proof is given in \citep{pontryagin2018mathematical}. In the next section we compare different numerical implementations of this analytical form.
\begin{table*}[t]
\centering
\caption{\small{Comparison between different methods for gradient estimation in continuous case. %For memory and computation cost, the multiplicative constant $N_z$ is ignored. 
MALI achieves reverse accuracy, constant memory \textit{w.r.t} number of solver steps in integration, shallow computation graph and low computation cost.}}
\label{table:comparison}
\vspace{-2mm}
\scalebox{0.73}{
\begin{tabular}{c|ccc|c}
\hline
                  & Naive & Adjoint & ACA & MALI \\ \hline
Computation      & $Nz N_f \times N_t \times m \times 2 $       &    $Nz N_f \times (N_t + N_r) \times m$    & $Nz N_f \times N_t \times (m +1 )$ & $Nz N_f \times N_t \times (m +2)$ \\
Memory           & $Nz N_f \times N_t \times m$      &  $Nz N_f$ & $Nz(N_f + N_t)$  & $Nz( N_f + 1)$     \\ Computation graph depth & $ N_f \times N_t \times m$ & $N_f \times N_r$ & $N_f \times N_t$ & $N_f \times N_t $ \\
Reverse accuracy &  {\color{applegreen}\cmark}     &      {\color{red}\xmark}     & {\color{applegreen}\cmark}  & {\color{applegreen}\cmark}   \\

\hline
\end{tabular}
}
\end{table*}
\begin{figure*}
\centering
    \begin{minipage}{0.45\textwidth}
    \centering
        \includegraphics[ width=\linewidth]{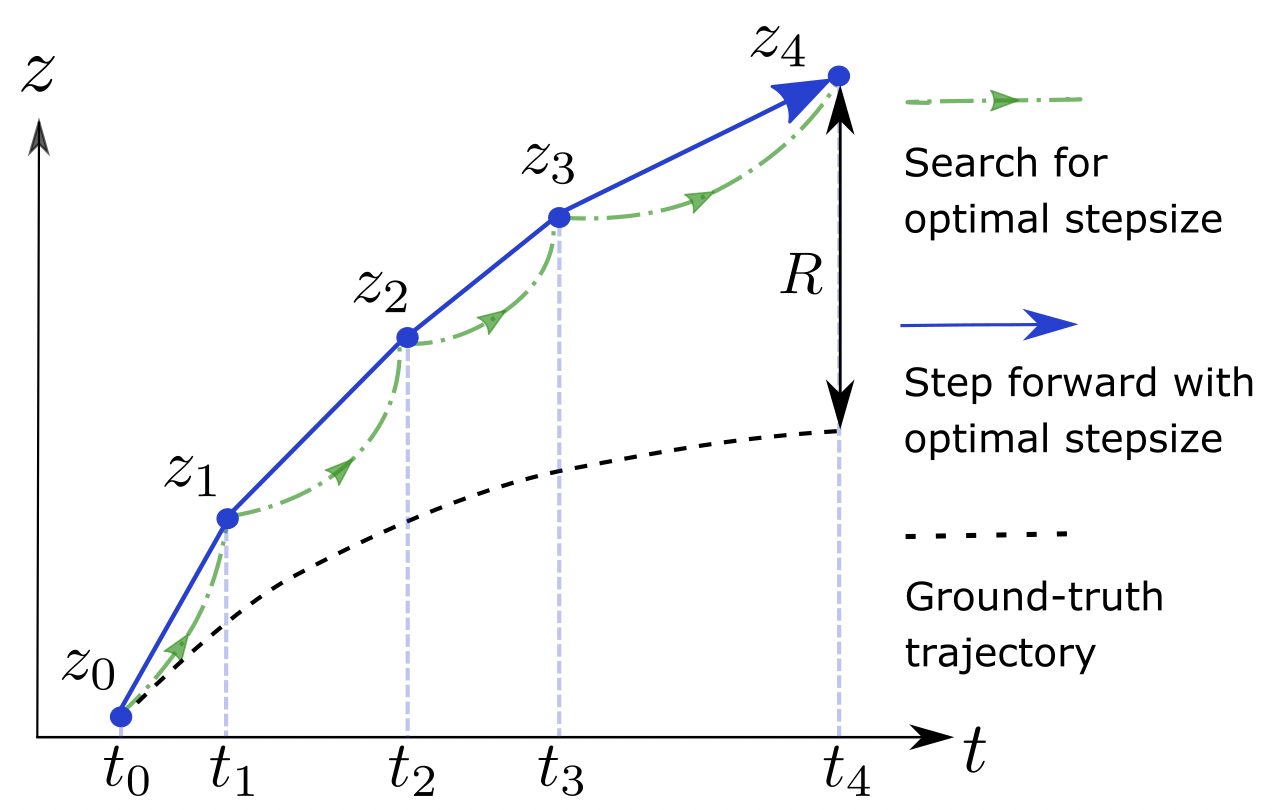}
        \captionof{figure}{\small{Illustration of numerical solver in forward-pass. For adaptive solvers, for each step forward-in-time, the stepsize is recursively adjusted until the estimated error is below predefined tolerance; the search process is represented by green curve, and the accepted step (ignore the search process) is represented by blue curve. 
        %Suppose the numerical integration is discretized at time points $t_1$ to $t_4$ with values $z_1$ to $z_4$, with corresponding accepted stepsizes $t_{i+1}-t_i, i=0,1,2,3$.
        %The ground-truth curve is represented by black dashed curve, and $R$ is the difference between numerical integration and ground-truth.
        }}
        %\label{fig:forward}
        \label{fig:forward}
    \end{minipage}
    \hfill
    \begin{minipage}{0.45\textwidth}
    \centering
        \includegraphics[width=\linewidth]{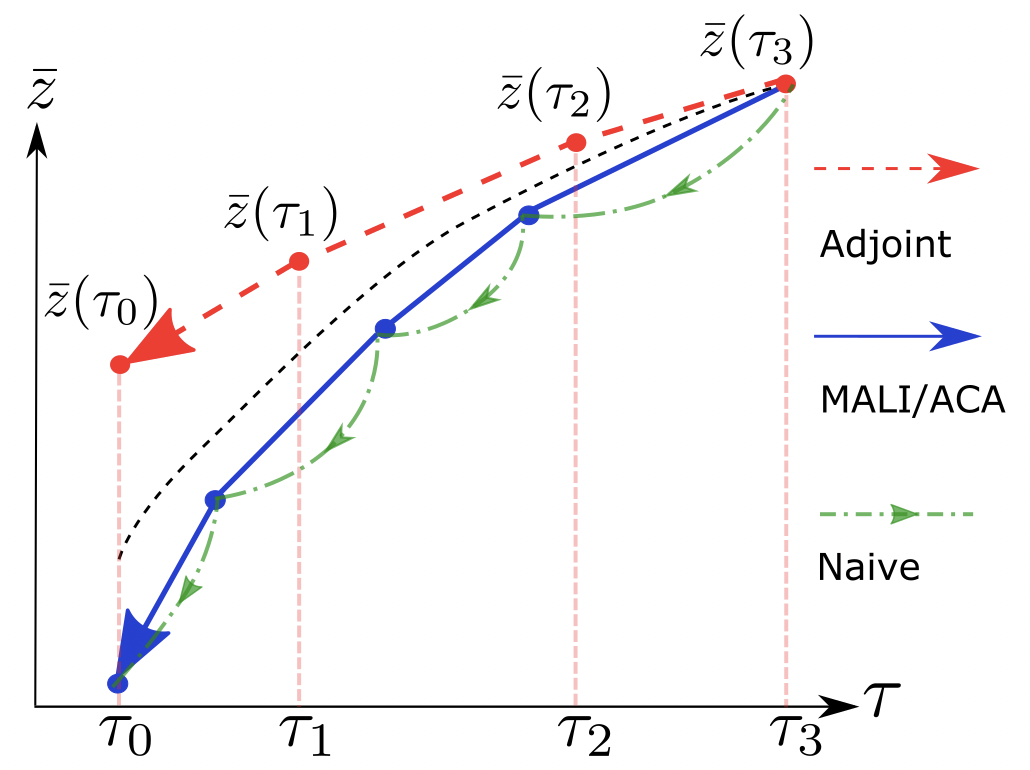} \captionof{figure}{\small{In backward-pass, the adjoint method reconstructs trajectory as a separate IVP. Naive, ACA and MALI track the forward-time trajectory, hence are accurate. ACA and MALI only backpropagate through the accepted step, while naive method backpropagates through the search process hence has deeper computation graphs.
        %note that the reverse-time trajectory is not guaranteed to match the forward-time trajectory. 
        %it's possible that the discretized time points in reverse-time (denoted as $\tau$) is different from forward-time integration. 
        %The naive method back-propagates through the entire computation graph including the search process for optimal stepsize. ALF and ACA only back-propagrates through the accepted step, and omit the computation graph to search for optimal stepsize; ALF can accurately reconstruct the forward-time trajectory, and ACA records it in memory, hence both guarantee the accuracy.
        }}
        %\label{fig:backward}
        \label{fig:backward}
    \end{minipage}
    \vspace{-3mm}
\end{figure*}
\subsection{Numerical implementations in the literature for the analytical form}
We compare different numerical implementations of the analytical form in this section. The forward-pass and backward-pass of different methods are demonstrated in Fig.~\ref{fig:forward} and Fig.~\ref{fig:backward} respectively. Forward-pass is similar for different methods. 
%Note that for each step forward, the solver might need to iteratively try and adjust stepsize until the estimated error is below tolerance; this search process is represented by green curve, and the accepted step is represented by blue curves. 
The comparison of backward-pass among different methods are summarized in Table.~\ref{table:comparison}. We explain methods in the literature below.

\textbf{Naive method}\ \ The naive method saves all of the computation graph (including search for optimal stepsize, green curve in Fig.~\ref{fig:backward}) in memory, and backpropagates through it. Hence the memory cost is $N_z N_f \times N_t \times m$ and depth of computation graph are $N_f \times N_t \times m$, and the computation is doubled considering both forward and backward passes. Besides the large memory and computation, the deep computation graph might cause vanishing or exploding gradient \citep{pascanu2013difficulty}.

\textbf{Adjoint method}\ \ Note that we use ``\textit{adjoint state equation}'' to refer to the \textit{analytical form} in Eq.~\ref{eq:analytic_grad} and \ref{eq:lambda_ode}, while we use ``\textit{adjoint method}'' to refer to the \textit{numerical implementation} by \citet{chen2018neural}. As in Fig.~\ref{fig:forward} and ~\ref{fig:backward}, the adjoint method forgets forward-time trajectory (blue curve) to achieve memory cost $N_z N_f$ which is constant to integration time; it takes the end-time state (derived from forward-time integration) as the initial state, and solves a separate IVP (red curve) in reverse-time.%, hence achieves memory cost $O(N_f)$, which is constant \textit{w.r.t} integration time. 
\begin{theorem}\citep{zhuang2020adaptive}
\label{thm:error_adj}
For an ODE solver of order $p$, the error of the reconstructed initial value by the adjoint method is $\sum_{k=0}^{N-1} \big[ h_k^{p+1}D\Phi_{t_k}^{T}(z_k) l(t_k,z_k) + (-h_k)^{p+1}D\Phi_{T}^{t_k}( \overline{z_k}) \overline{l(t_k,\overline{z_k})} \big ] + O(h^{p+1})$, where $\Phi$ is the ideal solution, $D\Phi$ is the Jacobian of $\Phi$, $l(t,z)$ and $\overline{l(t,z)}$ are the local error in forward-time and reverse-time integration respectively.
\end{theorem}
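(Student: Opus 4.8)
The plan is to treat this as a standard accumulation-of-local-errors argument for one-step solvers — the classical ``Lady Windermere's fan'' decomposition \citep{wanner1996solving} — applied twice: once to the forward pass and once to the reverse IVP, then stitched together through the flow's Jacobian. First I would fix notation for the exact flow: let $\Phi_s^t$ denote the exact solution operator carrying a state at time $s$ to its value at time $t$, so that $\Phi$ obeys the semigroup identity $\Phi_u^t\circ\Phi_s^u=\Phi_s^t$ and, upon differentiating, its Jacobian satisfies the chain rule $D\Phi_u^t\,D\Phi_s^u=D\Phi_s^t$. Assuming $f_\theta$ is smooth enough that the flow is $C^1$ and the variational (linearized) equation has uniformly bounded solutions on $[t_0,T]$, an order-$p$ step from $(t_k,z_k)$ has local defect $\Phi_{t_k}^{t_{k+1}}(z_k)-\psi_{h_k}(t_k,z_k)=h_k^{p+1}l(t_k,z_k)+O(h_k^{p+2})$, and the same statement run with stepsize $-h_k$ governs each reverse step, producing the $(-h_k)^{p+1}\,\overline{l(t_k,\overline{z_k})}$ defect.

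Next I would analyze the two passes in order. For the forward pass I would write the global error $z_N-\Phi_{t_0}^T(x)$ as a telescoping sum of the per-step defects, each transported to the terminal frame by the flow Jacobian, giving a sum of terms $h_k^{p+1}D\Phi(\cdot)\,l(t_k,z_k)$ modulo $O(h^{p+1})$. The adjoint method then feeds the numerical endpoint $z_N$ into a fresh reverse IVP; I would expand the reconstructed value as (the exact reverse flow of $z_N$) plus (the reverse global error), the latter again a Lady-Windermere sum but over reverse steps carrying the $(-h_k)^{p+1}\,\overline{l(t_k,\overline{z_k})}$ defects. Linearizing the exact reverse flow about the true endpoint and composing transported forward defects with the reverse flow via the Jacobian chain rule collapses everything into a single sum over $k$ with one forward and one reverse contribution per step, each transported by a flow Jacobian, matching the two-sum structure of the statement. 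The conceptual heart of the result, which I would state explicitly, is that these two families of defects do \emph{not} cancel: were the discrete reverse map the exact inverse of the discrete forward map (as for a genuinely reversible integrator), the sums would annihilate and reconstruction would be exact; for a generic solver they persist, and the theorem quantifies the residual — precisely the motivation for the reversible ALF scheme behind MALI.

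The main obstacle I anticipate is the order bookkeeping needed to absorb all cross-terms into the stated $O(h^{p+1})$ remainder while keeping the leading $O(h^p)$ sum exact. I must verify that (i) evaluating the transporting Jacobians at the numerical states $z_k,\overline{z_k}$ rather than exact states costs only higher-order terms, since every defect is already $O(h^{p+1})$; (ii) the second-order term in linearizing the reverse flow about the true endpoint is $O((h^p)^2)=O(h^{2p})\subseteq O(h^{p+1})$ for $p\ge 1$; and (iii) the accumulation of the $N=O(1/h)$ error terms, each propagated by Jacobians of uniformly bounded operator norm (via a Gr\"onwall estimate on the variational flow), leaves an aggregate remainder one order below the leading sum. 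Establishing these uniform bounds under the smoothness of $f_\theta$ is where the real work lies; once the transport operators are pinned down, the telescoping algebra itself is routine.
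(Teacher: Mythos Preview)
The paper does not actually prove this theorem: immediately after stating it, the authors write ``Theorem~\ref{thm:error_adj} is stated as Theorem 3.2 in \citet{zhuang2020adaptive}; please see reference paper for detailed proof.'' So there is no in-paper proof to compare against. That said, your Lady Windermere's fan decomposition is almost certainly the intended argument --- note that the paper's Figures~\ref{fig:forward} and~\ref{fig:backward} are literally named \texttt{windermere3.png} and \texttt{windermere\_4.png} in the source, which is a strong tell that the referenced proof in \citet{zhuang2020adaptive} proceeds exactly as you outline: transport forward local defects to the endpoint, transport reverse local defects back, and observe that for a non-reversible integrator the two families fail to cancel. Your identification of the order-bookkeeping obstacles (evaluating Jacobians at numerical vs.\ exact states, the quadratic term in the linearization, and the Gr\"onwall control on the variational flow) is also the right list of technical checks; nothing in your plan is missing or would fail.
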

Theorem~\ref{thm:error_adj} is stated as Theorem 3.2 in \citet{zhuang2020adaptive}; please see reference paper for detailed proof. To summarize, due to inevitable errors with numerical ODE solvers, the reverse-time trajectory (red curve, $\overline{z}(\tau)$) cannot match the forward-time trajectory (blue curve, $z(t)$) accurately. The error in $\overline{z}$ propagates to $\frac{\mathrm{d}L}{\mathrm{d}\theta}$ by Eq.~\ref{eq:analytic_grad}, hence affects the accuracy in gradient estimation.

\textbf{Adaptive checkpoint adjoint (ACA)}\ \ 
%Naive method suffers from exploding or vanishing gradient issue, and has large memory and computation cost due to the deep computation graph; the adjoint method suffers from numerical error. 
To solve the inaccuracy of adjoint method, \citet{zhuang2020adaptive} proposed ACA: ACA stores forward-time trajectory in memory for backward-pass, hence guarantees accuracy; ACA deletes the search process (green curve in Fig.~\ref{fig:backward}), and only back-propagates through the accepted step (blue curve in Fig.~\ref{fig:backward}), hence has a shallower computation graph ($N_f \times N_t$ for ACA vs $N_f \times N_t \times m$ for naive method). ACA only stores $\{z(t_i)\}_{i=1}^{N_t}$, and deletes the computation graph for $\{f\big( z(t_i), t_i \big)\}_{i=1}^{N_t}$, hence the memory cost is $N_z(N_f + N_t)$. Though the memory cost is much smaller than the naive method, it grows linearly with $N_t$, and can not handle very high dimensional models. In the following sections, we propose a method to overcome all these disadvantages of existing methods.
\section{Methods}
\subsection{Asynchronous Leapfrog Integrator}
\label{subsec:ALF}
In this section we give a brief introduction to the asynchronous leapfrog (ALF) method \citep{mutze2013asynchronous}, and we provide theoretical analysis which is missing in \citet{mutze2013asynchronous}. 
%ALF is close to the ``LeapFrog'' or ``St\" omer-Verlet'' method \citep{verlet1967computer} but different. 
For general first-order ODEs in the form of Eq.~\ref{eq:NODE}, the tuple $(z,t)$ is sufficient for most ODE solvers to take a step numerically. For ALF, the required tuple is $(z,v,t)$, where $v$ is the ``approximated derivative''. Most numerical ODE solvers such as the Runge-Kutta method \citep{runge1895numerische} track state $z$ evolving with time, while ALF tracks the ``augmented state'' $(z,v)$.
We explain the details of ALF as below. 
\begin{minipage}{0.48\textwidth}
\begin{algorithm}[H]
\label{alg:ALF_forward}
\SetAlgorithmName{Algorithm}{} \\
%\TitleOfAlgo { \textbf{Adam Optimizer}} \\
\textbf{Input} $(z_{in}, v_{in}, s_{in},h)$ where $s_{in}$ is current time, $z_{in}$ and $v_{in}$ are correponding values at time $s_{in}$, $h$ is stepsize.\\
\textbf{Forward} \hspace{3mm}$s_1 = s_{in} + h/2$ \\
\hspace{16mm} $k_1 = z_{in} + v_{in} \times h/2$ \\
\hspace{16mm} $u_1 = f(k_1, s_1)$ \\
\hspace{16mm} $v_{out} = v_{in} + 2( u_1 - v_{in})$ \\
\hspace{16mm} $z_{out} = k_1 + v_{out} \times h /2$ \\
\hspace{16mm} $s_{out} = s_1 + h/2$ \\
\textbf{Output} \hspace{4mm}\scalebox{0.95}{$(z_{out}, v_{out}, s_{out},h)$}
\caption{Forward of $\psi$ in ALF}
\end{algorithm}
\end{minipage}
\hfill
\begin{minipage}{0.48\textwidth}
\begin{algorithm}[H]
\label{alg:ALF_inverse}
\SetAlgorithmName{Algorithm}{} \\
%\TitleOfAlgo { \textbf{Adam Optimizer}} \\
\textbf{Input} $(z_{out}, v_{out}, s_{out},h) $ where $s_{out}$ is current time, $z_{out}$ and $v_{out}$ are corresponding values at $s_{out}$, $h$ is stepsize.\\
\textbf{Inverse} \hspace{5mm}$s_1 = s_{out} - h/2$ \\
\hspace{16mm} $k_1 = z_{out} - v_{out}\times h/2$ \\
\hspace{16mm} $u_1 = f(k_1, s_1)$ \\
\hspace{16mm} $v_{in} = 2 u_{1} - v_{out}$ \\
\hspace{16mm} $z_{in} = k_1 - v_{in} \times h /2$ \\
\hspace{16mm} $s_{in} = s_1 - h/2$ \\
\textbf{Output} \hspace{4mm} $(z_{in}, v_{in}, s_{in},h)$
\caption{ $\psi^{-1}$ (Inverse of $\psi$) in ALF}
\end{algorithm}
\end{minipage}

\begin{wrapfigure}{r}{0.43\textwidth}
\begin{minipage}{\linewidth}
\centering
\includegraphics[width=\linewidth]{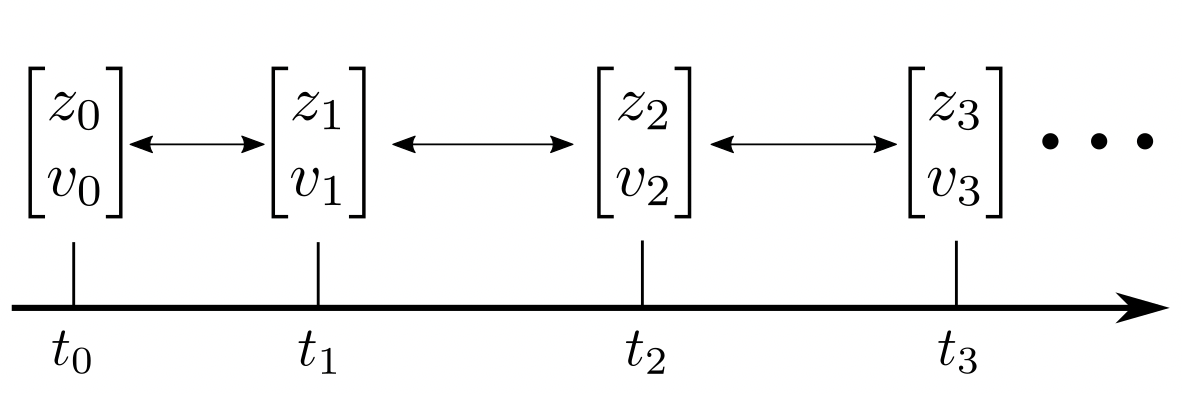}
\vspace{-6mm}
\captionof{figure}{\small{With ALF method, given any tuple $(z_j,v_j,t_j)$ and discretized time points $\{t_i\}_{i=1}^{N_t}$, we can reconstruct the entire trajectory accurately due to the reversibility of ALF.}}
\label{fig:recon}
\end{minipage}
\end{wrapfigure}

\textbf{Procedure of ALF}\ \ Different ODE solvers have different $\psi$ in Algo.~\ref{alg:int}, hence we only summarize $\psi$ for ALF in Algo.~\ref{alg:ALF_forward}. Note that for a complete algorithm of integration for ALF, we need to plug Algo.~\ref{alg:ALF_forward} into Algo.~\ref{alg:int}. The forward-pass is summarized in Algo.~\ref{alg:ALF_forward}. Given stepsize $h$, with input $(z_{in}, v_{in}, s_{in})$, a single step of ALF outputs $(z_{out}, v_{out}, s_{out})$.

As in Fig.~\ref{fig:recon}, given $(z_0,v_0,t_0)$, the numerical forward-time integration calls Algo.~\ref{alg:ALF_forward} iteratively:
\begin{align}
\label{eq:all_forward}
%\resizebox{0.95\hsize}{!}{
    %$
    (z_i,v_i,t_i,h_i) = \psi(z_{i-1},v_{i-1},t_{i-1},h_i) \nonumber \\ s.t.\ \ h_i = t_i - t_{i-1},\ \  i = 1,2,...N_t
    %$}
\end{align}
\textbf{Invertibility of ALF}\ \ An interesting property of ALF is that $\psi$ defines a bijective mapping; therefore, we can reconstruct $(z_{in}, v_{in}, s_{in},h)$ from $(z_{out}, v_{out}, s_{out},h)$, as demonstrated in Algo.~\ref{alg:ALF_inverse}. As in Fig.~\ref{fig:recon}, 
%since every numerical step is a bijective mapping, 
we can reconstruct the entire trajectory given the state $(z_j, v_j)$ at time $t_j$, and the discretized time points $\{t_0, ... t_{N_t}\}$. For example, given $(z_{N_t},v_{N_t})$ and $\{t_i\}_{i=0}^{N_t}$, the %forward-time trajectory 
trajectory for Eq.~\ref{eq:all_forward}
is reconstructed:
\begin{equation}
\label{eq:all_inverse}
%\resizebox{0.95\hsize}{!}{
    %$
    (z_{i-1},v_{i-1},t_{i-1},h_i) = \psi^{-1}(z_i,v_i,t_i,h_i)\ \ s.t.\ \ h_i = t_i - t_{i-1},\ \   i = N_t, N_t -1, ..., 1
    %$}
\end{equation}
In the following sections, we will show the invertibility of ALF is the key to maintain accuracy at a constant memory cost to train Neural ODEs. 
%\textbf{Inverse is not back-prop}\ \ 
Note that ``inverse'' refers to reconstructing the input from the output without computing the gradient, hence is different from ``back-propagation''. 

\textbf{Initial value}\ \  For an initial value problem (IVP) such as Eq.~\ref{eq:NODE}, typically $z_0 = z(t_0)$ is given while $v_0$ is undetermined. We can construct $v_0 =f(z(t_0),t_0)$, so the initial augmented state is $(z_0,v_0)$. 

\textbf{Difference from midpoint integrator} \ \ The midpoint integrator \citep{suli2003introduction} is similar to Algo.~\ref{alg:ALF_forward}, except that it recomputes $v_{in} = f(z_{in},s_{in})$ for every step, while ALF directly uses the input $v_{in}$. Therefore, the midpoint method does not have an explicit form of inverse.

\textbf{Local truncation error} \ \  Theorem~\ref{theorem:truncation} indicates that the local truncation error of ALF is of order $O(h^3)$; this implies the global error is $O(h^2)$. %, which is similar to the Euler method.
Detailed proof is in Appendix~\ref{sup:subsec:localerror}.
\begin{theorem}
\label{theorem:truncation}
For a single step in ALF with stepsize $h$, the local truncation error of $z$ is $O(h^3)$, and the local truncation error of $v$ is $O(h^2)$.
\end{theorem}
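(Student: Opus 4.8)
The plan is to reduce both updates in Algorithm~\ref{alg:ALF_forward} to explicit one-step formulas and then Taylor-expand them against the exact solution. First I would eliminate the intermediate quantities: substituting $k_1 = z_{in} + v_{in}h/2$ and $v_{out} = 2u_1 - v_{in}$ into $z_{out} = k_1 + v_{out}h/2$ gives the compact forms
\begin{equation}
z_{out} = z_{in} + h\,u_1, \qquad v_{out} = 2u_1 - v_{in}, \qquad u_1 = f\!\left(z_{in} + \tfrac{h}{2}v_{in},\, s_{in} + \tfrac{h}{2}\right).
\end{equation}
For the local truncation error I would assume, as usual, that the input lies exactly on the true trajectory: writing $s = s_{in}$, take $z_{in} = z(s)$ and $v_{in} = \dot z(s) = f(z(s),s)$, consistent with the initialization $v_0 = f(z(t_0),t_0)$. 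The reference values at time $s+h$ are then $z(s+h)$ and $\dot z(s+h) = f(z(s+h),s+h)$, and the two truncation errors are $z_{out}-z(s+h)$ and $v_{out}-\dot z(s+h)$.

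The core computation is a single multivariate Taylor expansion of $u_1$ about $(z(s),s)$. Using $\dot z = f$ and $\ddot z = f_t + f_z f$, I would expand
\begin{equation}
u_1 = f + \tfrac{h}{2}\big(f_t + f_z\,\dot z\big) + O(h^2) = \dot z(s) + \tfrac{h}{2}\ddot z(s) + O(h^2),
\end{equation}
with all derivatives evaluated at $(z(s),s)$. For the $z$-component, multiplying by $h$ yields $z_{out} = z(s) + h\dot z(s) + \tfrac{h^2}{2}\ddot z(s) + O(h^3)$, which matches the expansion $z(s+h) = z(s) + h\dot z + \tfrac{h^2}{2}\ddot z + O(h^3)$ through order $h^2$; hence $z_{out}-z(s+h) = O(h^3)$. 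This is also the point at which one sees that, when $v_{in}$ is the exact derivative, a single ALF step coincides with the explicit midpoint step, so the $z$-error simply inherits the midpoint order.

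For the $v$-component I would carry the expansion of $u_1$ one order further, to $u_1 = \dot z + \tfrac{h}{2}\ddot z + \tfrac{h^2}{2}\,u_1''(0) + O(h^3)$, where $u_1''(0)$ collects the second-order contributions $f_{tt}, f_{tz}, f_{zz}$ contracted along the half-step direction $(\dot z/2,\,1/2)$. Then $v_{out} = 2u_1 - v_{in} = \dot z + h\ddot z + h^2 u_1''(0) + O(h^3)$, whereas $\dot z(s+h) = \dot z + h\ddot z + \tfrac{h^2}{2}\dddot z + O(h^3)$. The $O(1)$ and $O(h)$ terms cancel, leaving $v_{out}-\dot z(s+h) = h^2\big(u_1''(0) - \tfrac12\dddot z\big) + O(h^3) = O(h^2)$, and since in general $u_1''(0)\neq\tfrac12\dddot z$ this order is sharp rather than an overestimate. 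The main obstacle I anticipate is purely the bookkeeping of the vector-valued second-order Taylor expansion — keeping the Jacobian and Hessian contractions of $f$ consistent and confirming that the leading terms in $v_{out}-\dot z(s+h)$ cancel exactly; a secondary conceptual point is fixing the correct reference value for the auxiliary variable $v$ (namely $\dot z(s+h)$), since $v$ does not appear in the original IVP and so its "exact" target must be defined before the error can be stated.
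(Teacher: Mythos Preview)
Your argument is correct and uses the same core computation as the paper: rewrite the step as $z_{out}=z_{in}+h\,u_1$, $v_{out}=2u_1-v_{in}$ with $u_1=f(z_{in}+\tfrac{h}{2}v_{in},\,s_{in}+\tfrac{h}{2})$, Taylor-expand $u_1$ about $(z_{in},s_{in})$, and compare with the expansion of the exact solution. The one difference is in framing. You impose the standard local-truncation hypothesis $v_{in}=f(z_{in},s_{in})$ at the outset, which immediately makes the $z$-update coincide with the explicit midpoint step and gives the orders directly. The paper instead keeps $\widehat{v_0}$ generic throughout and obtains
\[
L_z=\tfrac{h^2}{2}\,f_z\bigl(f(\widehat{z_0},s_0)-\widehat{v_0}\bigr)+O(h^3),\qquad
L_v=\bigl(f(\widehat{z_0},s_0)-\widehat{v_0}\bigr)(1+h f_z)+O(h^2),
\]
only afterwards invoking the initialization $\widehat{v_0}=f(\widehat{z_0},s_0)$ and an informal induction over steps to kill the extra terms. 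Your route is cleaner for the theorem exactly as stated; the paper's version buys an explicit dependence of the one-step error on the running discrepancy $f-\widehat v$, which is what one needs when arguing about consecutive ALF steps where $v_{in}$ is no longer the exact derivative. Your midpoint observation is a nice shortcut the paper does not use.
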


\textbf{A-Stability}\ \ The ALF solver has a limited stability region, but this can be solved with damping. The damped ALF replaces the update of $v_{out}$ in Algo.~\ref{alg:ALF_forward} with $v_{out} = v_{in} + 2\eta (u_1-v_{in})$, where $\eta$ is the ``damping coefficient'' between 0 and 1. We have the following theorem on its numerical stability.% of the damped ALF.
\begin{theorem}
\label{thm:stability}
For the damped ALF integrator with stepsize $h$, where $\sigma_i$ is the $i$-th eigenvalue of the Jacobian $\frac{\partial f}{\partial z}$, then the solver is A-stable if $\Big \vert 1 + \eta (h \sigma_i -1 ) \pm \sqrt{ \eta \big[ 2 h \sigma_i + \eta (h \sigma_i -1)^2 \big] } \Big \vert < 1, \  \forall i$
\end{theorem}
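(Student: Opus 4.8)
The plan is to follow the classical Dahlquist recipe for linear stability, adapted to the fact that ALF propagates the augmented state $(z,v)$ rather than $z$ alone. First I would apply one damped-ALF step (Algorithm~\ref{alg:ALF_forward} with $v_{out}=v_{in}+2\eta(u_1-v_{in})$) to the scalar linear test equation $f(z,t)=\sigma z$, where $\sigma=\sigma_i$ is a fixed eigenvalue of the Jacobian $\partial f/\partial z$. Because $f$ is linear, the intermediate quantities $k_1$, $u_1$, $v_{out}$, $z_{out}$ are all linear in $(z_{in},v_{in})$, so the step collapses to a linear recurrence $(z_{out},v_{out})^\top = M\,(z_{in},v_{in})^\top$ for a $2\times 2$ amplification matrix $M$ depending on $h$, $\sigma$ and $\eta$. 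Stability of the iteration is then governed entirely by the spectrum of $M$.

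The second step is to render $M$ a function of the single product $\mu := h\sigma$. Substituting $u_1=\sigma(z_{in}+\tfrac{h}{2}v_{in})$ gives $v_{out}=2\eta\sigma z_{in}+(1-2\eta+\eta\mu)\,v_{in}$, and inserting this into $z_{out}=k_1+\tfrac h2 v_{out}$ yields a matrix whose off-diagonal entries still carry bare factors of $h$ and $\sigma$. I would remove this by the change of variable $w:=hv$, which is a similarity transform by $\mathrm{diag}(1,h)$ and hence leaves the spectrum untouched. In the $(z,w)$ coordinates the amplification matrix becomes $\tilde M=\begin{pmatrix}1+\eta\mu & \tfrac12(2-2\eta+\eta\mu)\\ 2\eta\mu & 1-2\eta+\eta\mu\end{pmatrix}$, which depends only on $\mu$ and $\eta$.

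Next I would read off the trace and determinant. A short computation gives $\mathrm{tr}\,\tilde M = 2\big(1+\eta(\mu-1)\big)$ and, after the quadratic cross terms cancel, the clean $\det\tilde M = 1-2\eta$. Solving the characteristic equation $\lambda^2 - (\mathrm{tr}\,\tilde M)\lambda + \det\tilde M = 0$ then gives $\lambda_\pm = 1+\eta(\mu-1)\pm\sqrt{\eta\big[2\mu+\eta(\mu-1)^2\big]}$, using the identity $\big(1+\eta(\mu-1)\big)^2-(1-2\eta)=2\eta\mu+\eta^2(\mu-1)^2$ to simplify the discriminant. Re-substituting $\mu=h\sigma_i$ reproduces exactly the two roots inside the absolute value in the statement. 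The solver is linearly stable on the mode $\sigma_i$ precisely when both amplification factors lie in the open unit disk, i.e. $|\lambda_\pm|<1$, which is the claimed inequality; demanding it for every $i$ handles the full linearized system after diagonalizing the Jacobian and decoupling the modes.

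The part I expect to require the most care is the reduction from the vector ODE to the scalar test problem and the precise notion of stability being asserted. Strictly, A-stability asks that $|\lambda_\pm|\le 1$ hold for \emph{all} $\sigma$ with nonpositive real part; here the theorem instead records the per-eigenvalue condition obtained after diagonalizing $\partial f/\partial z$ and treating each mode independently, which is rigorous only when the Jacobian is (locally) diagonalizable and frozen along the step. I would state that assumption explicitly. The remaining work---expanding $M$, performing the $w=hv$ rescaling, and verifying that the determinant collapses to $1-2\eta$---is routine algebra; the one place a stray sign or factor of two could slip in is the discriminant simplification, so I would double-check that identity carefully.
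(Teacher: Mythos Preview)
Your proposal is correct and reaches the same eigenvalue formula as the paper, but the route differs in a way worth noting. The paper works directly with the full $2N\times 2N$ Jacobian
\[
J=\begin{bmatrix}
I+\eta h\,\partial_z f & (1-\eta)hI+\tfrac{\eta h^2}{2}\,\partial_z f\\[2pt]
2\eta\,\partial_z f & \eta h\,\partial_z f+(1-2\eta)I
\end{bmatrix}
\]
and, to compute $\det(J-\lambda I)$, invokes a block-determinant identity (their Lemma on $\det\bigl[\begin{smallmatrix}A&B\\C&D\end{smallmatrix}\bigr]=\det(AD-BC)$ when $CD=DC$), only afterwards diagonalizing $\partial_z f$ to split into scalar factors. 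You instead diagonalize \emph{first}, reducing to the scalar test problem $f=\sigma z$, so that the step map is a plain $2\times2$ matrix whose spectrum you read off from trace and determinant. Your route avoids the block-determinant lemma entirely and surfaces the clean fact $\det\tilde M=1-2\eta$, which the paper does not isolate; the paper's route, on the other hand, makes the passage from the vector system to the per-mode condition more explicit without having to state the ``frozen, diagonalizable Jacobian'' assumption you (rightly) flag. Either way the algebra and the final inequality coincide.
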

Proof is in Appendix \ref{sup:subsec:stability} and \ref{sup:sec:damp}. Theorem~\ref{thm:stability} implies the following: when $\eta=1$, the damped ALF reduces to ALF, and the stability region is empty; when $0<\eta<1$, the stability region is non-empty. However, stability describes the behaviour when $T$ goes to infinity; in practice we always use a bounded $T$ and ALF performs well. Inverse of damped ALF is in Appendix A.5.

%\subsection{Theoretical analysis of ALF}
%We summarize the theoretical analysis of ALF below, with detailed proof in Appendix.~\ref{sup:sec:theory}.
% \begin{proof}
% See Appendix.~\ref{sup:sec:theory}.
% \end{proof}
% \begin{theorem}
% \label{theorem:stability}
% For ALF integrator with stepsize $h$, if $h \sigma_i$ is 0 or is imaginary with norm no larger than 1, where $\sigma_i$ is the $i$-th eigenvalue of the Jacobian $\frac{\partial f}{\partial z}$, then the solver is on the critical boundary of A-stability; otherwise, the solver is not A-stable.
% \end{theorem}
 %Theorem ~\ref{theorem:stability} implies that ALF is not A-stable. Therefore, the invertibility comes at a price, ALF is not suitable for stiff equations. However, for most Neural ODE models, we expect to learn a smooth non-stiff transition, which is typically achieved by applying penalty on the kinectic energy and Frobenius norm of the Jacobian\citep{finlay2020train}. Therefore, ALF is suitable for Neural ODE models in practice.
\vspace{-1mm}
\subsection{Memory-efficient ALF Integrator (MALI) for gradient estimation}
\label{subsec:ALF_all}
\vspace{-1mm}
An ideal solver for Neural ODEs should achieve two goals: accuracy in gradient estimation and constant memory cost \textit{w.r.t} integration time. Yet none of the existing methods can achieve both goals.
%As summarized in Table.~\ref{table:comparison}, gradient estimation with the adjoint method suffers from error in reverse-time trajectory, the naive method suffers from huge memory and exploding or vanishing gradient issues caused by deep computation graph, and ACA find a balance but the memory cost grows linearly with $N_t$. 
We propose a method based on the ALF solver, 
%to achieve accuracy in reverse-time trajectory with a constant memory cost, 
which to our knowledge is the first method to achieve the two goals simultaneously. 
%As summarized in Table.~\ref{table:comparison}, the adjoint method achieves constant memory cost $O(N_f)$, but the gradient estimation suffers from error in reverse-time trajectory; the naive method has a very deep computation, and suffers from huge memory cost $O(N_f \times N_t \times m)$ and exploding or vanishing gradient issues; ACA
\begin{algorithm}
%\TitleOfAlgo { \textbf{Adam Optimizer}} \\
\textbf{Input} Initial state $z_0$, start time $t_0$, end time $T$\\
\textbf{Forward} \\
\hspace{4mm}Apply the numerical integration in Algo.~\ref{alg:int}, with the $\psi$ function defined by Algo.~\ref{alg:ALF_forward}. \\
\hspace{4mm}Delete computation graph on the fly, only keep end-time state $(z_{N_t}, v_{N_t})$ \\ 
\hspace{4mm}Keep \textit{accepted} discretized time points $\{t_i\}_{i=0}^{N_t}$ (ignore process to search for optimal stepsize) \\
\textbf{Backward}\\
\hspace{4mm} Initialize $a(T) = \frac{\partial L}{\partial z(T)}$ by Eq.~\ref{eq:lambda_ode}, initialize $\frac{\mathrm{d}L}{\mathrm{d}\theta}=0$\\
\hspace{4mm} \textbf{For} $i$ in $\{N_t, N_t -1, ..., 2, 1\}$: \\
\hspace{10mm} %Reconstruct input by Algo.~\ref{alg:ALF_inverse}, $h_i = t_i - t_{i-1},\ \ (z_{i-1}, v_{i-1}, t_{i-1}, h_i )= \psi^{-1} (z_{i}, v_{i}, t_{i}, h_i )$ \\
Reconstruct $(z_{i-1},v_{i-1})$ from $(z_i,v_i)$ by Algo.~\ref{alg:ALF_inverse} \\
\hspace{10mm} Local forward $(z_{i}, v_{i}, t_{i}, h_i )= \psi (z_{i-1}, v_{i-1}, t_{i-1}, h_i )$ \\
\hspace{10mm} Local backward, get $\frac{\partial f(z_{i-1},t_{i-1}, \theta)}{\partial z_{i-1}}$ and $\frac{\partial f(z_{i-1},t_{i-1}, \theta)}{\partial \theta}$ 
%$\partial f(z_{i-1},t_{i-1}, \theta)/\partial z_{i-1}$ and $\partial f(z_{i-1},t_{i-1}, \theta)/\partial \theta$
\\
\hspace{10mm} Update $a(t)$ and $\frac{\mathrm{d}L}{\mathrm{d}\theta}$ by Eq.~\ref{eq:analytic_grad} and Eq.~\ref{eq:lambda_ode} discretized at time points $t_{i-1}$ and $t_i$ \\
\hspace{10mm} Delete local computation graph \\%, only keep $(z_{i-1},v_{i-1})$, $a(t_{i-1})$ and $\frac{\mathrm{d}L}{\mathrm{d}\theta}$. \\
\hspace{4mm} \textbf{Output} the adjoint state $a(t_0)$ (gradient \textit{w.r.t} input $z_0$) and parameter gradient $\frac{\mathrm{d}L}{\mathrm{d}\theta}$
\caption{MALI to acheive accuracy at a constant memory cost \textit{w.r.t} integration time}
\label{alg:ALF_all} 
\end{algorithm}\\
\textbf{Procedure of MALI}\ \ Details of MALI are summarized in Algo.~\ref{alg:ALF_all}. For the forward-pass, we only keep the end-time state $(z_{N_t},v_{N_t})$ and the \textit{accepted} discretized time points (blue curves in Fig.~\ref{fig:forward} and ~\ref{fig:backward}). We ignore the search process for optimal stepsize (green curve in Fig.~\ref{fig:forward} and \ref{fig:backward}), and delete other variables to save memory. During the backward pass, we can reconstruct the forward-time trajectory as in Eq.~\ref{eq:all_inverse}, then calculate the gradient by numerical discretization of Eq.~\ref{eq:analytic_grad} and Eq.~\ref{eq:lambda_ode}. 
%hence guarantee the accuracy of reverse-time trajectory (e.g. blue curve in Fig.~\ref{fig:backward} matches the blue curve in Fig.~\ref{fig:forward}). 

\textbf{Constant memory cost \textit{w.r.t} number of solver steps in integration}\ \ We delete the computation graph and only keep the end-time state to save memory. The memory cost is $N_z(N_f + 1)$, where $N_z N_f$ is due to evaluating $f(z,t)$ and is irreducible for all methods. Compared with the adjoint method, MALI only requires extra $N_z$ memory to record $v_{N_t}$, and also has a constant memory cost \textit{w.r.t} time step $N_t$. % ( $N_z \times (N_f + 1)$ for MALI, $N_z \times N_f$ for adjoint method). %The memory cost is constant \textit{w.r.t} integration time steps $N_t$. 
The memory cost is $N_z (N_f + 1)$. %as in Table.~\ref{table:comparison}.

\textbf{Accuracy} \ \ Our method guarantees the accuracy of reverse-time trajectory (e.g. blue curve in Fig.~\ref{fig:backward} matches the blue curve in Fig.~\ref{fig:forward}), because ALF is explicitly invertible for free-form $f$ (see Algo.~\ref{alg:ALF_inverse}). Therefore, the gradient estimation in MALI is more accurate compared to the adjoint method.

\textbf{Computation cost}\ \ Recall that on average it takes $m$ steps to find an acceptable stepsize, whose error estimate is below tolerance. Therefore, the forward-pass with search process has computation burden $N_z \times N_f \times N_t \times m$. Note that we only reconstruct and backprop through the \textit{accepted} step and ignore the search process, hence it takes another $N_z \times N_f \times N_t \times 2$ computation. %Ignoring the constant $N_z$, 
The overall computation burden is $N_z N_f \times N_t \times (m+2)$ as in Table~\ref{table:comparison}.
\begin{figure*}
\centering
    \begin{minipage}{0.32\textwidth}
    \centering
        \includegraphics[width=\linewidth]{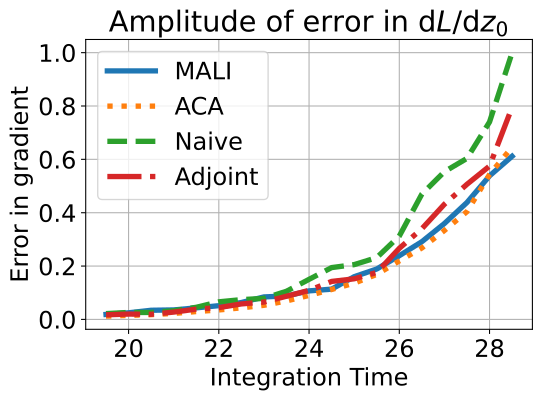}
        %\captionof{figure}{\small{Error in $\frac{\mathrm{d}L}{\mathrm{d}y_0}$ in toy example.}}
        \label{fig:grad_input}
    \end{minipage}
    \begin{minipage}{0.32\textwidth}
    \centering
        \includegraphics[width=\linewidth]{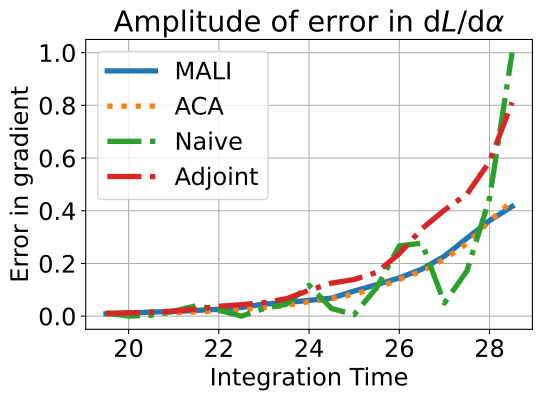}
        %\captionof{figure}{\small{ Error in $\frac{\mathrm{d}L}{\mathrm{d}\alpha}$ in toy example.}}
        \label{fig:grad_param}
    \end{minipage}
    \begin{minipage}{0.32\textwidth}
    \centering
        \includegraphics[width=\linewidth]{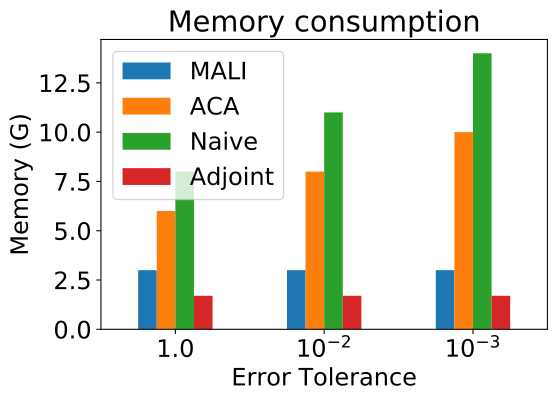}
        %\captionof{figure}{\small {Comparison of memory usage.}}
        \label{fig:memory}
    \end{minipage}
    \vspace{-7mm}
    %\captionof{figure}{\small {From left to right: (a) error in $\mathrm{d}L/\mathrm{d}y_0$ for example in Eq.~\ref{eq:toy}, (b) error in $\mathrm{d}L/\mathrm{d} \alpha$, (c) memory cost. %Seminorm adjoint has the same memory cost as the adjoint method.}}
    \captionof{figure}{\small { Comparison of error in gradient in Eq.~\ref{eq:toy}. (a) error in $\frac{ \mathrm{d}L}{\mathrm{d}z_0}$. (b) error in $\frac{\mathrm{d}L}{\mathrm{d} \alpha}$. (c) memory cost. } }
    \label{fig:toy}
    \vspace{-2mm}
\end{figure*}
\begin{figure*}
\centering
\begin{subfigure}{0.35\textwidth}
\centering
\includegraphics[width=\linewidth]{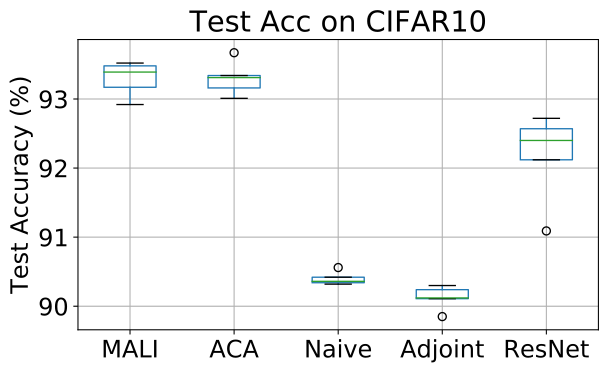}
\end{subfigure}
\begin{subfigure}{0.31\textwidth}
\centering
\includegraphics[width=\linewidth]{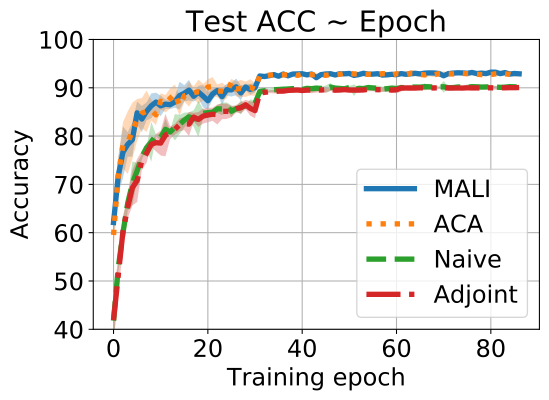}
\end{subfigure}
\begin{subfigure}{0.31\textwidth}
\centering
\includegraphics[width=\linewidth]{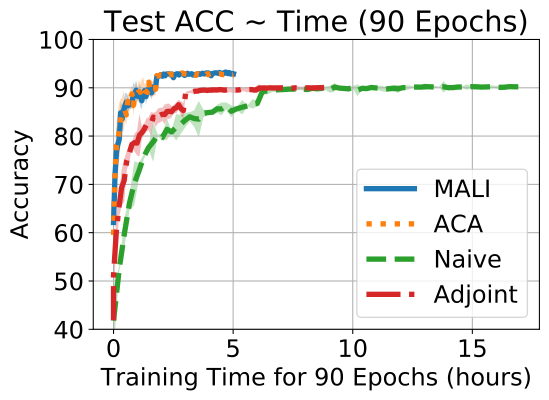}
\end{subfigure}
\vspace{-2mm}
\caption{\small {Results on Cifar10. From left to right: (1) box plot of test accuracy (first 4 columns are Neural ODEs, last is ResNet); (2) test accuracy $\pm std$ v.s. training epoch for Neural ODE; (3) test accuracy $\pm std$ v.s. training time of 90 epochs for Neural ODE.}}
\label{fig:cifar10}
\vspace{-3mm}
\end{figure*} 

\textbf{Shallow computation graph}\ \ Similar to ACA, MALI only backpropagates through the accepted step (blue curve in Fig.~\ref{fig:backward}) and ignores the search process (green curve in Fig.~\ref{fig:backward}), hence the depth of computation graph is $N_f \times N_t$. The computation graph of MALI is much shallower than the naive method, hence is more robust to  vanishing and exploding gradients \citep{pascanu2013difficulty}.

\textbf{Summary}\ \ %As summarized in Table~\ref{table:comparison}, t
The adjoint method suffers from inaccuracy in reverse-time trajectory, the naive method suffers from exploding or vanishing gradient caused by deep computation graph, and ACA finds a balance but the memory grows linearly with integration time. MALI achieves accuracy in reverse-time trajectory, constant memory \textit{w.r.t} integration time, and a shallow computation graph.
%Recording scalars $\{t_i\}_{i=1}^{N_t}$ requires $N_t$ memory. Note that for most high-dimensional deep-learning models, $N_z$ is very large (e.g. for a tensor of spatial size $64\times64$ with 64 channels, $N_z \approx 0.26M$), while $N_t$ is typically smaller than 100 to integrate from 0 to 1 \citep{chen2018neural}, hence we can ignore the memory to record scalars $\{t_i\}_{i=1}^{N_t}$.
\section{Experiments}
\label{sec:experiments}
\vspace{-1mm}
\subsection{Validation on a toy example}
We compare the performance of different methods on a toy example, defined as
\begin{equation}
\label{eq:toy}
    L(z(T)) = z(T)^2\ \ s.t.\ \  z(0)=z_0, \ \ \mathrm{d} z(t)/ \mathrm{d}t = \alpha z(t)
\end{equation}
The analytical solution is
\begin{equation}
    z(t) = z_0 e^{\alpha t},\ \  L = z_0^2 e^{2 \alpha T},\ \ \mathrm{d}L/\mathrm{d}z_0 = 2 z_0 e^{2 \alpha T},\ \ \mathrm{d}L/d \alpha = 2 T z_0^2 e^{2 \alpha T} 
\end{equation}
We plot the amplitude of error between numerical solution and analytical solution varying with $T$ (integrated under the same error tolerance, $\text{rtol}=10^{-5},\text{atol}=10^{-6}$) in Fig~\ref{fig:toy}.
%Fig.~\ref{fig:grad_input} shows error in $\frac{\mathrm{d}L}{\mathrm{d}z_0}$, and  Fig.~\ref{fig:grad_param} shows error in $\frac{\mathrm{d}L}{\mathrm{d}\alpha}$. 
%Besides MALI, ACA, naive and adjoint method, we also compare with the ``SemiNorm Adjoint'' proposed in a concurrent work \citep{kidger2020hey}, which is a variant of the adjoint method that has a higher error tolerance for the adjoint state of parameters.
%the norm function for the adjoint state of network parameters are ignored to accept larger stepsizes during reverse-time integration.
%; hence ``norm adjoint'' is faster than adjoint method. 
ACA and MALI have similar errors, both outperforming other methods. 
%The naive method has a large error in $\frac{\mathrm{d}L}{\mathrm{d}z_0}$, and the error is large and unstable for $\frac{\mathrm{d}L}{\mathrm{d}\alpha}$. The adjoint and norm adjoint method perform worse than ALF, because both suffer from inaccuracy in reverse-time trajectory; compared with the adjoint method, norm adjoint method uses the same calculation for adjoint of $z$, but has a higher error tolerance for adjoint of parameters, hence has the same error in $\frac{\mathrm{d}L}{\mathrm{d}z_0}$, but a larger error in $\frac{\mathrm{d}L}{\mathrm{d}\alpha}$.
We also plot the memory consumption for different methods on a Neural ODE with the same input in Fig.~\ref{fig:toy}. As the error tolerance decreases, the solver evaluates more steps, hence the naive method and ACA increase memory consumption, while MALI and the adjoint method have a constant memory cost. These results validate our analysis in Sec.~\ref{subsec:ALF_all} and Table~\ref{table:comparison}, and shows MALI achieves accuracy at a constant memory cost.
\begin{table}[]
\captionof{table}{\small{Top-1 test accuracy of Neural ODE and ResNet on ImageNet. %``OOT'' means ``Out of time'' because the ODE is stiff. 
Neural ODE is trained with MALI, and ResNet is trained as the original model; Neural ODE is tested using different solvers \textit{without} retraining.}}
\label{table:imagenet}
\vspace{-2mm}
\centering
\scalebox{0.7}{
\begin{tabular}{c|c|ccccc|cccc}
\hline
\multirow{2}{*}{}                                                     & \multicolumn{6}{c|}{Fixed-stepsize solvers of various stepsizes}      & \multicolumn{4}{c}{Adaptive-stepsize solver of various tolerances}                    \\ \cline{2-11} 
                                                                      &  Stepsize & 1     & 0.5   & 0.25  & 0.15  & 0.1   & \multicolumn{1}{c|}{Tolerance} & 1.00E+00 & 1.00E-01 & 1.00E-02 \\ \hline
\multirow{4}{*}{\begin{tabular}[c]{@{}c@{}}Neural\\ ODE \end{tabular}} & MALI      & 42.33 & 66.4  & 69.59 & 70.17 & 69.94 & \multicolumn{1}{c|}{MALI}       & 62.56    & 69.89    & 69.87    \\
                                                                      & Euler    & 21.94 & 61.25 & 67.38 & 68.69 & 70.02 & \multicolumn{1}{c|}{Heun-Euler}     & 68.48    & 69.87    & 69.88    \\
                                                                      & RK2     & 42.33 & 69    & 69.72 & 70.14 & 69.92 & \multicolumn{1}{c|}{RK23}       & 50.77    & 69.89    & 69.93    \\
                                                                      & RK4      & 12.6  & 69.99 & 69.91 & \textbf{70.21} & 69.96 & \multicolumn{1}{c|}{Dopri5}       & 52.3     & 68.58    & 69.71    \\ \hline
%\multirow{4}{*}{ResNet}                                               & MALI      & 51.4  & 0.1   & 0.1   & 0.1   & 0.11  & \multicolumn{1}{c|}{MALI}       & 0.08     & OOT      & OOT      \\
                                                                    %  & Euler    & \textbf{70.09} & 7.78  & 0.09  & 0.1   & 0.1   & \multicolumn{1}{c|}{Heun-Euler}     & 0.11     & OOT      & OOT      \\
                                                                 %     & RK2     & 51.4  & 0.1   & 0.09  & 0.09  & 0.1   & \multicolumn{1}{c|}{RK23}       & 0.09     & OOT      & OOT      \\
                                                                  %    & RK4      & 0.1   & 0.1   & 0.11  & 0.1   & 0.09  & \multicolumn{1}{c|}{Dopri5}       & 0.1      & OOT      & OOT      \\ \hline
                                                                  \multicolumn{1}{c|}{ResNet} & \multicolumn{10}{c}{70.09} \\ \hline
\end{tabular}
}
\vspace{-3mm}
\end{table} 
% \begin{minipage}{0.5\textwidth}
% \includegraphics[width=\linewidth]{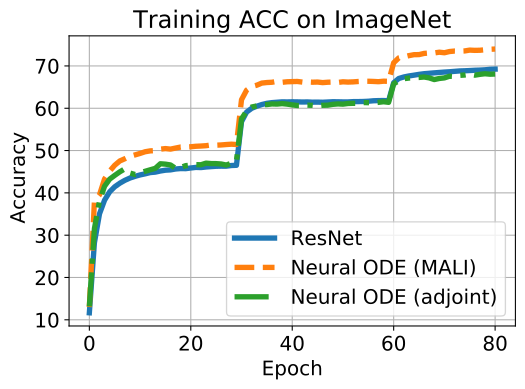}
% \caption{\small{Top-1 accuracy on ImageNet training set.}}
% \label{fig:imagenet_train}
% \end{minipage}
\begin{table}
\centering
\captionof{table}{\small{Top-1 accuracy under FGSM attack. $\epsilon$ is the perturbation amplitude. For Neural ODE models, row names represent the solvers to derive the gradient for attack, and column names represent solvers for inference on the perturbed image.}}
\label{table:adversarial}
\vspace{-2mm}
\scalebox{0.7}{
\begin{tabular}{c|c|cccc|cccc}
\hline
\multicolumn{2}{c|}{\multirow{2}{*}{}}                                         & \multicolumn{4}{c|}{$\epsilon = 1/255$} & \multicolumn{4}{c}{$\epsilon = 2/255$} \\ \cline{3-10} 
\multicolumn{2}{c|}{}                                                          & MALI      & Heun-Euler   & RK23    & Dopri5   & MALI     & Heun-Euler   & RK23    & Dopri5   \\ \hline
\multirow{4}{*}{\begin{tabular}[c]{@{}c@{}}Neural\\ ODE\end{tabular}} & MALI    & 14.69    & 14.72   & 14.77   & 15.71    & 10.38   & 10.46   & 10.62   & 10.62    \\
                                                                      & Heun-Euler  & 14.77    & 14.75   & 14.80    & \textbf{15.74}    & 10.63   & 10.47   & 10.44   & 10.49    \\
                                                                      & RK23   & 14.82    & 14.77   & 14.79   & 15.69    & \textbf{10.78}   & 10.53   & 10.48   & 10.56    \\
                                                                      & Dopri5 & 14.82    & 14.78   & 14.79   & 15.15    & 10.76   & 10.49   & 10.48   & 10.51    \\ \hline
\multicolumn{2}{c|}{ResNet}                                                    & \multicolumn{4}{c|}{13.02}              & \multicolumn{4}{c}{9.57}               \\ \hline
\end{tabular}
}
\vspace{-5mm}
\end{table}
\subsection{Image recognition with Neural ODE}
We validate MALI on image recognition tasks using Cifar10 and ImageNet datasets. Similar to \citet{zhuang2020adaptive}, we modify a ResNet18 into its corresponding Neural ODE: the forward function is $y=x+f_\theta(x)$ and $y= x +\int_0^T f_\theta(z) \mathrm{d}t$ for the residual block and Neural ODE respectively, where the same $f_\theta$ is shared. We compare MALI with the naive method, adjoint method and ACA.%We train Neural-ODE18 using our MALI solver on Cifar10 and ImageNet, and compare results with the literature. 

\textbf{Results on Cifar10}\ \ 
%We set the error tolerance for MALI and ACA as $rtol=0.1, atol=0.01$, and set as $rtol=atol=1e-5$ for the naive and adjoint method using \textit{Dopri5} solvers. 
Results of 5 independent runs on Cifar10 are summarized in Fig.~\ref{fig:cifar10}. MALI achieves comparable accuracy to ACA, and both significantly outperform the naive and the adjoint method. Furthermore, the training speed of MALI is similar to ACA, and both are almost two times faster than the adjoint memthod, and three times faster than the naive method. This validates our analysis on accuracy and computation burden in Table~\ref{table:comparison}. \\
\begin{wrapfigure}{r}{0.35\textwidth}
\begin{minipage}{\linewidth}
\centering
\includegraphics[width=\linewidth]{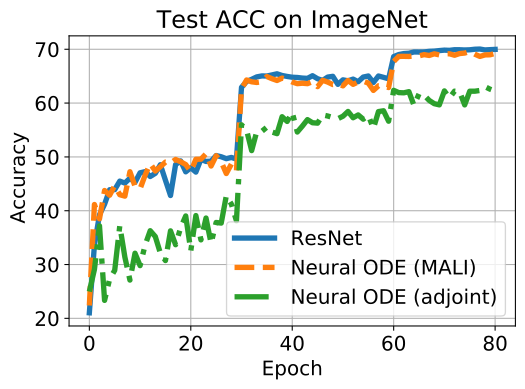}
\vspace{-7mm}
\captionof{figure}{\small{Top-1 accuracy on ImageNet validation dataset.}}
\label{fig:imagenet_test}
\end{minipage}
\vspace{-6mm}
\end{wrapfigure} 
\textbf{Accuracy on ImageNet}\ \ Due to the heavy memory burden caused by large images, the naive method and ACA are unable to train a Neural ODE on ImageNet with 4 GPUs; only MALI and the adjoint method are feasible due to the constant memory. We also compare the Neural ODE to a standard ResNet. As shown in Fig.~\ref{fig:imagenet_test}, the accuracy of the Neural ODE trained with MALI closely follows ResNet, and significantly outperforms the adjoint method (top-1 validation: 70\% v.s. 63\%).

\textbf{Invariance to discretization scheme}\ \
%Besides the high accuracy of MALI method, we demonstrate MALI is robust to discretization scheme. 
A continuous model should be invariant to discretization schemes (e.g. different types of ODE solvers) as long as the discretization is sufficiently accurate. We test the Neural ODE using different solvers \textit{without} re-training; since ResNet is often viewed as a one-step Euler discretization of an ODE \citep{haber2017stable}, we perform similar experiments.
%to test if such discretization is a meaningful dynamical system.
%As mentioned in \citep{queiruga2020continuous}, discretized integrators (e.g. ResNet as a one-step Euler discretization of an ODE) often fail to be meaningful dynamical systems. 
As shown in Table~\ref{table:imagenet}, %when tested with different solvers \textit{without} retraining, 
Neural ODE consistently achieves high accuracy ($\sim$70\%), while ResNet drops to random guessing ($\sim$0.1\%) because ResNet as a one-step Euler discretization fails to be a meaningful dynamical system \citep{queiruga2020continuous}.

\textbf{Robustness to adversarial attack}\ \ \citet{hanshu2019robustness} demonstrated that Neural ODE is more robust to adversarial attack than ResNet on small-scale datasets such as Cifar10. We validate this result on the large-scale ImageNet dataset. The top-1 accuracy of Neural ODE and ResNet under FGSM attack \citep{goodfellow2014explaining} are summarized in Table~\ref{table:adversarial}. For Neural ODE, due to its invariance to discretization scheme, we derive the gradient for attack using a certain solver (row in Table~\ref{table:adversarial}), and inference on the perturbed images using various solvers. For different combinations of solvers and perturbation amplitudes, Neural ODE consistently outperforms ResNet.% Furthermore, inference with high-order solvers 

\textbf{Summary}\ \  In image recognition tasks, we demonstrate Neural ODE is accurate, invariant to discretization scheme, and more robust to adversarial attack than ResNet. Note that detailed explanation on the robustness of Neural ODE is out of the scope for this paper, but to our knowledge, MALI is the \textit{first} method to enable training of Neural ODE on large datasets due to constant memory cost.
\vspace{-1mm}
\subsection{Time-series modeling}
\vspace{-1mm}
We apply MALI to latent-ODE \citep{rubanova2019latent} and Neural Controlled Differential Equation (Neural CDE) \citep{kidger2020hey, kidger2020neural}. Our experiment is based on the official implementation from the literature. 
We report the mean squared error (MSE) on the \textit{Mujoco} test set in Table~\ref{table:mujoco}, which is generated from the ``Hopper'' model using DeepMind control suite \citep{tassa2018deepmind}; for all experiments with different ratios of training data, MALI achieves similar MSE to ACA, and both outperform the adjoint and naive method. We report the test accuracy on the \textit{Speech Command} dataset for Neural CDE in Table~\ref{table:ncde}; MALI achieves a higher accuracy than competing methods.
\vspace{-1mm}
\subsection{Continuous generative models} 
\vspace{-1mm}
We apply MALI on FFJORD \citep{grathwohl2018ffjord}, a free-from continuous generative model, and compare with several variants in the literature \citep{finlay2020train, kidger2020hey}. Our experiment is based on the official implementaion of \cite{finlay2020train}; for a fair comparison, we train with MALI, and test with the same solver as in the literature \citep{grathwohl2018ffjord, finlay2020train}, the \textit{Dopri5} solver with $\text{rtol}=\text{atol}=10^{-5}$ from the \textit{torchdiffeq} package \citep{chen2018neural}. Bits per dim (BPD, lower is better) on validation set for various datasets are reported in Table~\ref{table:ffjord}. For continuous models, MALI consistently generates the lowest BPD, and outperforms the Vanilla FFJORD (trained with adjoint), RNODE (regularized FFJORD) and the SemiNorm Adjoint \citep{kidger2020hey}. Furthermore, FFJORD trained with MALI achieves comparable BPD to state-of-the-art discrete-layer flow models in the literature. Please see Sec.~\ref{sup:sec:ffjord} for generated samples.
\vspace{-4mm}
\section{Related works}
\vspace{-2mm}
Besides ALF, the symplectic integrator \citep{verlet1967computer, yoshida1990construction} is also able to reconstruct trajectory accurately, yet it's typically restricted to second order Hamiltonian systems \citep{de1990hamiltonian}, and are unsuitable for general ODEs. Besides aforementioned methods, there are other methods for gradient estimation such as interpolated adjoint \citep{daulbaev2020interpolated} and spectral method \citep{quaglino2019snode}, yet the implementations are involved and not publicly available. Other works focus on the theoretical properties of Neural ODEs \citep{dupont2019augmented, tabuada2020universal,massaroli2020dissecting}. Neural ODE is recently applied to stochastic differential equation \citep{li2020scalable}, jump differential equation \citep{jia2019neural} and auto-regressive models \citep{wehenkel2019unconstrained}.
\footnotetext{\footnotesize {1.  \citet{rubanova2019latent}; 2. \citet{zhuang2020adaptive}; 3.  \citet{kidger2020hey}; 4. \citet{chen2018neural}; 5. \citet{finlay2020train}; 6. \citet{dinh2016density}; 7.  \citet{behrmann2019invertible}; 8. \citet{kingma2018glow}; 9. \citet{ho2019flow++}; 10. \citet{chen2019residual}}}

\begin{table}
\begin{minipage}{0.67\textwidth}
\centering
\captionof{table}{\small{Test MSE ($\times 0.01$) on Mujoco dataset (lower is better). Results marked with superscript numbers correspond to literature in the footnote.}}
\label{table:mujoco}
\scalebox{0.8}{
\begin{tabular}{c|cc|cccc}
\hline
\multirow{2}{*}{\begin{tabular}[c]{@{}c@{}}Percentage\\ of training data \\\end{tabular}} & \multirow{2}{*}{RNN$^1$} & \multirow{2}{*}{RNN-GRU$^1$} & \multicolumn{4}{c}{Latent-ODE} \\ \cline{4-7}                                                  &                      &                          & Adjoint$^1$  & Naive$^{2}$ & ACA$^{2}$  & MALI  \\ \hline
10\%                                                                                     & 2.45$^1$                 & 1.97$^{2}$                     & 0.47$^{1}$     & 0.36$^{2}$  & \textbf{0.31}$^{2}$ & 0.35 \\
20\%                                                                                     & 1.71$^{1}$                 & 1.42$^{1}$                     & 0.44$^{1}$     & 0.30$^{2}$  & \textbf{0.27}$^{2}$ & \textbf{0.27} \\
50\%                                                                                     & 0.79$^{1}$                 & 0.75$^{1}$                     & 0.40$^{1}$     & 0.29$^{2}$  & \textbf{0.26}$^{2}$ & \textbf{0.26} \\ \hline
\end{tabular}
}
\end{minipage}
\hfill
\begin{minipage}{0.3\textwidth}
\centering
\captionof{table}{\small{Test ACC on Speech Command Dataset}}
\label{table:ncde}
\vspace{-2mm}
\scalebox{0.8}{
\begin{tabular}{c|c}
\hline
Method   & Accuracy (\%) \\ \hline
Adjoint$^3$  & $92.8\pm0.4$  \\
SemiNorm$^3$ & $92.9\pm0.4$  \\
Naive    & $93.2\pm0.2$  \\
ACA      & $93.2\pm0.2$  \\
MALI      & $\mathbf{93.7\pm0.3}$  \\ \hline
\end{tabular}
}
% \scalebox{0.75}{
% \begin{tabular}{c|c|c}
% \hline
% \multicolumn{2}{c|}{GRU-ODE}                                                     & $47.9\pm2.9$  \\ \hline
% \multicolumn{2}{c|}{ODE-RNN}                                                     & $65.9\pm35.6$ \\ \hline
% \multirow{5}{*}{\begin{tabular}[c]{@{}c@{}}Neural\\ CDE\end{tabular}} & Adjoint  & $92.8\pm0.4$  \\
%                                                                       & SemiNorm & $92.9\pm0.4$  \\
%                                                                       & Naive    & $93.2\pm0.2$  \\
%                                                                       & ACA      & $93.2\pm0.2$  \\
%                                                                       & MALI      & $93.7\pm0.3$  \\ \hline
% \end{tabular}
% }
\end{minipage}
\vspace{-2mm}
\end{table}
\begin{table}[]
\caption{\small{Bits per dim (BPD) of generative models, \textit{lower} is better. Results marked with superscript numbers correspond to literature in the footnote.}}
\label{table:ffjord}
\vspace{-1mm}
\scalebox{0.77}{
\begin{tabular}{c|cccc|ccccc}
\hline
\multirow{2}{*}{Dataset} & \multicolumn{4}{c|}{Continuous Flow (FFJORD)} & \multicolumn{5}{c}{Discrete Flow}                 \\ \cline{2-10} 
                         & Vanilla$^4$  & RNODE$^5$  & SemiNorm$^3$ & MALI      & RealNVP$^6$ & i-ResNet$^7$ & Glow$^8$ & Flow++$^9$ & Residual Flow$^{10}$ \\ \hline
MNIST                    & 0.99$^4$     & 0.97$^5$   & 0.96$^3$     & \textbf{0.87} & 1.06$^6$    & 1.05$^7$     & 1.05$^8$ & -      & 0.97$^{10}$          \\
CIFAR10                  & 3.40$^4$     & 3.38$^5$   & 3.35$^3$     & \textbf{3.27} & 3.49$^6$    & 3.45$^7$     & 3.35$^8$ & 3.28$^9$   & 3.28$^{10}$          \\
ImageNet64               & -        & 3.83$^5$   & -        & \textbf{3.71} & 3.98$^6$    & -        & 3.81$^8$ & -      & 3.76$^{10}$ \\ 
\hline
\end{tabular}
}
\vspace{-3mm}
\end{table}
\vspace{-2mm}
\section{Conclusion}
\vspace{-2mm}
Based on the asynchronous leapfrog integrator, we propose MALI to estimate the gradient for Neural ODEs. To our knowledge, our method is the first to achieve accuracy, fast speed and a constant memory cost. We provide comprehensive theoretical analysis on its properties. We validate MALI with extensive experiments, and achieved new state-of-the-art results in various tasks, including image recognition, continuous generative modeling, and time-series modeling.
\newpage

\bibliography{iclr2021_conference}
\bibliographystyle{iclr2021_conference}
\newpage
%\appendix
\appendixwithtoc
\newpage
\setcounter{equation}{0}
\setcounter{figure}{0}
\setcounter{theorem}{0}
\setcounter{corollary}{0}
\setcounter{lemma}{0}
\setcounter{algocf}{0}

\section{Theoretical properties of ALF integrator}
\label{sup:sec:theory}
\subsection{Algorithm of ALF}
\label{sub:sec:ALF}
For the ease of reading, we write the algorithm for $\psi$ in ALF below, which is the same as Algo.~\ref{alg:ALF_forward} in the main paper, but uses slightly different notations for the ease of analysis.
\begin{algorithm}
\SetAlgorithmName{Algorithm}{} \\
%\TitleOfAlgo { \textbf{Adam Optimizer}} \\
\textbf{Input} $(\widehat{z_{in}}, \widehat{v_{in}}, s_{in},h) = (\widehat{z_0},\widehat{v_0},s_0,h)$ where $s_0$ is current time, $\widehat{z_0}$ and $\widehat{v_0}$ are correponding values at time $s_0$; stepsize $h$.\\
\textbf{Forward} 
\begin{align}
    s_1 &= s_0 + h/2 \\
    \widehat{z_1} &= \widehat{z_0} + \widehat{v_0}\times h/2 \\
    \widehat{v_1} &= f(\widehat{z_1}, s_1) \\
    \widehat{v_2} &= \widehat{v_1} + (\widehat{v_1} - \widehat{v_0}) \\
    \widehat{z_2} &= \widehat{z_1} + \widehat{v_2} \times h /2 \\
    s_2 &= s_1 + h/2
\end{align}
\textbf{Output} \hspace{28mm} $(\widehat{z_{out}}, \widehat{v_{out}}, s_{out},h) = (\widehat{z_2}, \widehat{v_2}, s_2,h)$
\caption{Forward of $\psi$ in ALF}
\label{sup:alg:ALF_forward2}
\end{algorithm}

For simplicity, we can re-write the forward of ALF as
\begin{equation}
\label{sup:eq:sim_forward}
\begin{bmatrix}
\widehat{z_2} \\
\\
\widehat{v_2}
\end{bmatrix} =
\begin{bmatrix}
\widehat{z_0} + h f(\widehat{z_0}+ \frac{h}{2} \widehat{v_0}, s_0 + \frac{h}{2}) \\
\\
2 f(\widehat{z_0}+ \frac{h}{2} \widehat{v_0}, s_0 + \frac{h}{2}) - \widehat{v_0}
\end{bmatrix}
\end{equation}
Similarly, the inverse of ALF can be written as
\begin{equation}
\label{sup:eq:sim_inverse}
\begin{bmatrix}
\widehat{z_0} \\ \widehat{v_0}
\end{bmatrix} =
\begin{bmatrix}
\widehat{z_2} - h f(\widehat{z_2}- \frac{h}{2} \widehat{v_2}, s_2 - \frac{h}{2}) \\
\\
2 f(\widehat{z_2}- \frac{h}{2} \widehat{v_2}, s_2 - \frac{h}{2}) - \widehat{v_2}
\end{bmatrix}
\end{equation}
\subsection{Preliminaries}
For an ODE of the form
\begin{equation}
\label{sup:eq:ODE}
    \frac{\mathrm{d} z(t)}{\mathrm{d}t} = f(z(t),t)
\end{equation}
We have:
\begin{align}
\label{sup:eq:2nd-grad}
\frac{\mathrm{d}^2z(t)}{dt^2} = \frac{\mathrm{d} }{\mathrm{d}t} f(z(t),t) = \frac{\partial f(z(t),t)}{\partial t} + \frac{\partial f(z(t),t)}{\partial z} \frac{\mathrm{d}z(t)}{\mathrm{d}t}
\end{align}
For the ease of notation, we re-write Eq.~\ref{sup:eq:2nd-grad} as
\begin{equation}
\label{sup:eq:2nd-grad-2}
    \frac{\mathrm{d}^2z(t)}{dt^2} = f_t + f_z f
\end{equation}
where $f_t$ and $f_z$ represents the partial derivative of $f$ \textit{w.r.t} $t$ and $z$ respectively.
\subsection{Local truncation error of ALF}
\label{sup:subsec:localerror}
\begin{theorem}[Theorem~\ref{theorem:truncation} in the main paper]
\label{sup:theorem:truncation}
For a single step in ALF with stepsize $h$, the local truncation error of $z$ is $O(h^3)$, and the local truncation errof of $v$ is $O(h^2)$.
\end{theorem}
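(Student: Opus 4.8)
The plan is to compare the one-step ALF update in Eq.~\ref{sup:eq:sim_forward} against the Taylor expansion of the exact solution, under the standard local-truncation convention that the input augmented state lies exactly on the trajectory, i.e. $\widehat{z_0}=z(s_0)$ and $\widehat{v_0}=\dot z(s_0)=f(z(s_0),s_0)$. This consistency for $v$ is precisely what the initial-value construction $v_0=f(z_0,t_0)$ guarantees, and it is the convention under which the statement should be read; I would state it explicitly at the outset, since the ``exact $v$'' is not an object in the original ODE but the auxiliary derivative $\dot z$.

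First I would Taylor-expand the exact solution about $s_0$, using $\dot z=f$, the identity $\ddot z=f_t+f_z f$ from Eq.~\ref{sup:eq:2nd-grad-2}, and the third derivative $\dddot z=f_{tt}+2f_{zt}f+f_{zz}f^2+f_z f_t+f_z^2 f$, to get $z(s_0+h)=z_0+h\dot z+\tfrac{h^2}{2}\ddot z+\tfrac{h^3}{6}\dddot z+O(h^4)$ and $\dot z(s_0+h)=\dot z+h\ddot z+\tfrac{h^2}{2}\dddot z+O(h^3)$, the latter being the exact counterpart of $v$.

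Next I would expand the common inner evaluation $f(z_0+\tfrac{h}{2}v_0,\,s_0+\tfrac{h}{2})$ about $(z_0,s_0)$. For the $z$-update, since this term is multiplied by $h$, a first-order expansion $f+\tfrac{h}{2}(f_z v_0+f_t)+O(h^2)=f+\tfrac{h}{2}\ddot z+O(h^2)$ suffices; substituting into $\widehat{z_2}=z_0+h\,f(\cdots)$ reproduces the exact expansion through the $h^2$ term, so the $z$-error is $O(h^3)$. For the $v$-update I need the second-order term of the same expansion, namely $\tfrac{h^2}{8}\bigl[f_{zz}f^2+2f_{zt}f+f_{tt}\bigr]$; feeding this into $\widehat{v_2}=2f(\cdots)-v_0$ gives $\dot z+h\ddot z+\tfrac{h^2}{4}\bigl[f_{zz}f^2+2f_{zt}f+f_{tt}\bigr]+O(h^3)$. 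The $h^0$ and $h^1$ terms match $\dot z(s_0+h)$, but the $h^2$ coefficient $\tfrac14\bigl[f_{zz}f^2+2f_{zt}f+f_{tt}\bigr]$ differs from the exact $\tfrac12\dddot z$, so the $v$-error does not cancel beyond second order and is genuinely $O(h^2)$.

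The main obstacle is conceptual rather than computational: fixing the correct notion of ``exact $v$'' together with the consistency assumption $\widehat{v_0}=\dot z(s_0)$. This matters because if the incoming $v_0$ carried even an $O(h)$ error, the increment $\tfrac{h}{2}v_0$ inside $f$ would inject an $O(h^2)$ error into $\widehat{z_2}$ and destroy the $O(h^3)$ claim, so the $z$-bound is only meaningful once input consistency is pinned down. After that, the remainder is careful but routine multivariate Taylor bookkeeping, and I would keep the remainders in Lagrange form so that the $O(h^3)$ and $O(h^2)$ statements are rigorous rather than merely formal.
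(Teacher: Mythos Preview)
Your proposal is correct and follows the standard local-truncation convention: you assume both $\widehat{z_0}=z(s_0)$ and $\widehat{v_0}=f(z(s_0),s_0)$ at the outset, then compare one-step Taylor expansions. The paper takes a slightly different route. It does \emph{not} assume $\widehat{v_0}=f(\widehat{z_0},s_0)$; instead it carries the mismatch $\delta:=f(\widehat{z_0},s_0)-\widehat{v_0}$ through the calculation and obtains $L_z=\tfrac{h^2}{2}f_z\,\delta+O(h^3)$ and $L_v=\delta+h f_z\,\delta+O(h^2)$, then appeals to the initialization $\delta=0$ at $t_0$ and an informal induction over steps to conclude $\delta$ stays small enough that $L_z=O(h^3)$ and $L_v=O(h^2)$. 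Your approach is cleaner and matches the textbook definition of local truncation error; you also go further by exhibiting the explicit $h^2$ coefficient in $L_v$ and showing it does not match $\tfrac12\dddot z$, so the $O(h^2)$ bound is sharp, which the paper does not do. The paper's version, in exchange, makes the dependence on the $v$-mismatch explicit, which hints at how error propagates across steps---though as you rightly flag, that inductive argument blurs the line between local and global error and is only loosely justified.
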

\begin{proof}
Under the same notation as Algo.~\ref{sup:alg:ALF_forward2}, denote the ground-truth state of $z$ and $v$ starting from $(\widehat{z_0},s_0)$ as $\widetilde{z}$ and $\widetilde{v}$ respectively. Then the local truncation error is 
\begin{equation}
\label{sup:eq:local_error}
    L_z = \widetilde{z}(s_0+h) - \widehat{z_2},\ \ L_v = \widetilde{v}(s_0+h) - \widehat{v_2}
\end{equation} 
We estimate $L_z$ and $L_v$ in terms of polynomial of $h$. 

Under mild assumptions that $f$ is smooth up to 2nd order almost everywhere (this is typically satisfied with neural networks with bounded weights), hence Taylor expansion is meaningful for $f$. By Eq.~\ref{sup:eq:2nd-grad-2}, the Taylor expansion of $\widetilde{z}$ around point $(\widehat{z_0},\widehat{v_0},s_0)$ is
\begin{align}
\widetilde{z}(s_0+h)&= \widehat{z_0} + h \frac{\mathrm{d}z}{dt} + \frac{h^2}{2} \frac{\mathrm{d}^2 z}{dt^2} + O(h^3)\\
&= \widehat{z_0} + h f(\widehat{z_0},s_0) + \frac{h^2}{2} \Big(f_t(\widehat{z_0},s_0)+ f_z(\widehat{z_0},s_0)f(\widehat{z_0},s_0)\Big) + O(h^3)
\label{sup:eq:taylor_z}
\end{align}
Next, we analyze accuracy of the numerical approximation. For simplicity, we directly analyze Eq.~\ref{sup:eq:sim_forward} by performing Taylor Expansion on $f$.
\begin{align}
\label{sup:eq:taylor_f}
f(\widehat{z_0} + \frac{h}{2}\widehat{v_0}, s_0 + \frac{h}{2}) = f(\widehat{z_0},s_0) + \frac{h}{2}f_t(\widehat{z_0},s_0) + \frac{h \widehat{v_0}}{2} f_z(\widehat{z_0},s_0) + O(h^2)
\end{align}
\begin{equation}
\label{sup:eq:z_2}
\widehat{z_2} = \widehat{z_0} + h f(\widehat{z_0} + \frac{h}{2}\widehat{v_0}, s_0 + \frac{h}{2})
\end{equation}
Plug Eq.~\ref{sup:eq:taylor_z}, Eq.~\ref{sup:eq:taylor_f} and E.q.~\ref{sup:eq:z_2} into the definition of $L_z$, we get
\begin{align}
L_z &= \widetilde{z}(s_0 + h) - \widehat{z_2} \\
&= \Big[ \widehat{z_0} + h f(\widehat{z_0},s_0) + \frac{h^2}{2} \Big(f_t(\widehat{z_0},s_0)+ f_z(\widehat{z_0},s_0)f(\widehat{z_0},s_0) \Big) \Big] \nonumber \\ &- \Big[ \widehat{z_0} + h \Big( f(\widehat{z_0},s_0) + \frac{h}{2}f_t(\widehat{z_0},s_0) + \frac{h \widehat{v_0}}{2} f_z(\widehat{z_0},s_0) \Big) \Big] + O(h^3) \\
&= \frac{h^2}{2} f_z(\widehat{z_0},s_0)\Big( f(\widehat{z_0},s_0) - \widehat{v_0} \Big) + O(h^3)
\label{sup:eq:Lz}
\end{align}
Therefore, if $ \Big \vert f(\widehat{z_0},s_0) - \widehat{v_0} \Big \vert$ is of order $O(1)$, $L_z$ is of order $O(h^2)$; if $\Big \vert f(\widehat{z_0},s_0) - \widehat{v_0}\Big \vert$ is of order $O(h)$ or smaller, then $L_z$ is of order $O(h^3)$. Specifically, at the start time of integration, we have $\Big \vert f(\widehat{z_0},s_0) - \widehat{v_0}=0\Big \vert$, by induction, $L_z$ at end time is $O(h^3)$.

Next we analyze the local truncation error in $v$, denoted as $L_v$. Denote the ground truth as $\widetilde{v}(t_0+h)$, we have 
\begin{align}
\widetilde{v}(s_0+h) &= f\big(\widetilde{z}(s_0+h), s_0 + h\big) \\ &= f(\widehat{z_0}, s_0) + h f_t(\widehat{z_0},s_0) + \big(\widetilde{z}(s_0+h) - \widehat{z_0}\big) 
f_z(\widehat{z_0},s_0) + O(h^2)
\label{sup:eq:ideal_v}
\end{align}
Next we analyze the error in the numerical approximation. Plug Eq.~\ref{sup:eq:taylor_f} into Eq.~\ref{sup:eq:sim_forward}, 
\begin{align}
\widehat{v_2} &= 2 f(\widehat{z_0} + \frac{h}{2} \widehat{v_0}, s_0 + \frac{h}{2}) - \widehat{v_0} \\
&= f(\widehat{z_0},s_0) + \big( f(\widehat{z_0},s_0) - \widehat{v_0} \big) + h f_t(\widehat{z_0},s_0) + h \widehat{v_0} f_z(\widehat{z_0},s_0) + O(h^2)
\label{sup:eq:hat_v2}
\end{align}
From Eq.~\ref{sup:eq:taylor_z}, Eq.~\ref{sup:eq:ideal_v} and Eq.~\ref{sup:eq:hat_v2}, we have
\begin{align}
L_v &= \widetilde{v}(s_0 + h) - \widehat{v_2} \\
&= \Big( f(\widehat{z_0},s_0) - \widehat{v_0} \Big) + \Big(\widetilde{z}(s_0 +h ) - \big( \widehat{z_0} + h \widehat{v_0} \big)\Big)f_z(\widehat{z_0},s_0) + O(h^2) \\
&= \Big( f(\widehat{z_0},s_0) - \widehat{v_0} \Big) + h \Big( f(\widehat{z_0},s_0) - \widehat{v_0} \Big)f_z(\widehat{z_0},s_0) + O(h^2)
\label{sup:eq:Lv}
\end{align}
The last equation is derived by plugging in Eq.~\ref{sup:eq:taylor_z}.
Note that Eq.~\ref{sup:eq:Lv} holds for every single step forward in time, and at the start time of integration, we have $\big \vert f(\widehat{z_0},s_0) - \widehat{v_0} \big \vert = 0$ due to our initialization as in Sec.~\ref{subsec:ALF} of the main paper. Therefore, by induction, $L_v$ is of order $O(h^2)$ for consecutive steps.
\end{proof}
\subsection{Stability analysis}
\label{sup:subsec:stability}
\begin{lemma}
\label{sup:lem:det}
For a matrix of the form $\begin{bmatrix}
A\ \  B \\ C\ \  D
\end{bmatrix}$, if $A, B, C, D$ are square matrices of the same shape, and $CD = DC$, then we have\ \ 
$\mathrm{det} \begin{bmatrix}
A\ \  B \\ C\ \  D
\end{bmatrix} = \mathrm{det}(AD - BC)$
\end{lemma}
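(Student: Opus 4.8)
The plan is to factor the block matrix on the right by a suitable block elementary matrix so that the product becomes block-triangular, apply multiplicativity of the determinant together with the block-triangular determinant formula, and then remove the invertibility hypothesis on $D$ by a polynomial-identity (continuity) argument.

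\textbf{Step 1 (invertible case).} First I would assume $D$ is invertible and exploit the algebraic identity
\begin{equation}
\begin{bmatrix} A & B \\ C & D \end{bmatrix}\begin{bmatrix} D & 0 \\ -C & I \end{bmatrix} = \begin{bmatrix} AD - BC & B \\ CD - DC & D \end{bmatrix}.
\end{equation}
Here the hypothesis $CD = DC$ is exactly what forces the lower-left block $CD - DC$ to vanish, so the product equals the block upper-triangular matrix $\begin{bmatrix} AD-BC & B \\ 0 & D\end{bmatrix}$.

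\textbf{Step 2 (take determinants).} Taking the determinant of both sides, using that the determinant is multiplicative and that a block-triangular matrix has determinant equal to the product of the determinants of its diagonal blocks, I obtain
\begin{equation}
\det\begin{bmatrix} A & B \\ C & D \end{bmatrix}\,\det(D) = \det(AD - BC)\,\det(D),
\end{equation}
where I also use $\det\begin{bmatrix} D & 0 \\ -C & I\end{bmatrix} = \det(D)$. Since $D$ is invertible, $\det(D)\neq 0$, so I cancel it and conclude the claim in this case.

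\textbf{Step 3 (singular case).} The main obstacle is that Step 2 divides by $\det(D)$, so the argument breaks when $D$ is singular; this is the only genuinely delicate point. To handle it I would perturb $D$ to $D_t = D + tI$. Commutativity is preserved, since $C(D+tI) = CD + tC = DC + tC = (D+tI)C$, and $\det(D_t)$ is a nonzero polynomial in $t$, hence $D_t$ is invertible for all but finitely many $t$. Applying Steps 1--2 to $D_t$ gives $\det\begin{bmatrix} A & B \\ C & D_t \end{bmatrix} = \det(AD_t - BC)$ for those $t$. Both sides are polynomials in $t$ (the entries depend polynomially on $t$, and $\det$ is a polynomial in the entries) that agree at infinitely many values, so they coincide identically; evaluating at $t = 0$ yields the identity for the original $D$ and completes the proof.
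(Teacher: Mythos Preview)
Your proof is correct. The paper does not supply its own argument for this lemma but simply cites Silvester (2000); the block factorization you use together with the perturbation $D_t = D + tI$ to remove the invertibility hypothesis is exactly the classical proof given in that reference.
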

\begin{proof}
See \citep{silvester2000determinants} for a detailed proof.
\end{proof}
\begin{theorem}
For ALF integrator with stepsize $h$, if $h \sigma_i$ is 0 or is imaginary with norm no larger than 1, where $\sigma_i$ is the $i$-th eigenvalue of the Jacobian $\frac{\partial f}{\partial z}$, then the solver is on the critical boundary of A-stability; otherwise, the solver is not A-stable.
\end{theorem}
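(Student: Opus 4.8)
The plan is to reduce the nonlinear stability question to a linear one via the standard test-equation approach, and then to read off the amplification factors of a single ALF step as eigenvalues of a $2\times 2$ block update matrix. First I would linearize $f$ about a fixed point so that the relevant dynamics is governed by the Jacobian $J=\frac{\partial f}{\partial z}$; applying the forward map of Eq.~\ref{sup:eq:sim_forward} with $f(z)=Jz$ to the augmented state $(\widehat{z},\widehat{v})^\top$ yields a linear recurrence whose one-step update matrix is
\begin{equation*}
M = \begin{bmatrix} I + hJ & \tfrac{h^2}{2}J \\ 2J & hJ - I \end{bmatrix}.
\end{equation*}
A-stability then amounts to requiring that every eigenvalue $\xi$ of $M$ satisfies $|\xi|\le 1$ whenever $J$ has spectrum in the closed left half-plane.

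Next I would compute the eigenvalues of $M$. Writing $M-\xi I$ in block form, the two lower blocks $2J$ and $hJ-(1+\xi)I$ are both polynomials in $J$ and hence commute, so Lemma~\ref{sup:lem:det} applies and reduces the $2N_z$-dimensional determinant to $\det\big[(\xi^2-1)I - 2h\xi J\big]$ once the cross term $\tfrac{h^2}{2}J\cdot 2J = h^2J^2$ cancels against $hJ\cdot hJ$ in the upper-block product. Diagonalizing $J$, each eigenvalue $\sigma_i$ contributes the scalar factor $\xi^2 - 2h\sigma_i\,\xi - 1$, whose roots are $\xi_\pm = h\sigma_i \pm \sqrt{1 + h^2\sigma_i^2}$.

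The decisive structural fact is that these roots satisfy $\xi_+\xi_- = -1$, so $|\xi_+|\,|\xi_-| = 1$ for every choice of $h\sigma_i$. Consequently A-stability, which needs both $|\xi_\pm|\le 1$, can only hold when $|\xi_+| = |\xi_-| = 1$ exactly; any deviation pushes one root strictly outside the unit disk, so the method is not A-stable. To pin down the marginal locus I would write $\xi_+ = e^{i\phi}$ and use $\xi_- = -1/\xi_+ = -\overline{\xi_+}$, so the trace relation $\xi_+ + \xi_- = 2h\sigma_i$ collapses to $h\sigma_i = i\,\mathrm{Im}(\xi_+)$, forcing $h\sigma_i$ to be purely imaginary with $|h\sigma_i|\le |\xi_+| = 1$. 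The converse is the direct check that $|h\sigma_i\pm\sqrt{1+h^2\sigma_i^2}| = 1$ when $h\sigma_i = i\beta$ with $|\beta|\le 1$, since there $\sqrt{1-\beta^2}$ stays real and $|\xi_\pm|^2 = (1-\beta^2)+\beta^2 = 1$.

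The main obstacle I anticipate is this final characterization rather than the algebra: one must argue cleanly that $\xi_+\xi_- = -1$ excludes any genuine interior stability region and confines the marginal boundary to the segment $\{i\beta : |\beta|\le 1\}$ of the imaginary axis, carefully tracking the branch of $\sqrt{1+h^2\sigma_i^2}$ as $\sigma_i$ leaves the imaginary axis (where $\sqrt{1-\beta^2}$ turns imaginary and one modulus exceeds $1$). This is precisely what underlies the main-text claim that ALF has an empty stability region and motivates the damped variant of Theorem~\ref{thm:stability}.
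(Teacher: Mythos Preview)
Your proposal is correct and follows essentially the same route as the paper: linearize to obtain the block update matrix, apply Lemma~\ref{sup:lem:det} to factor the characteristic polynomial through the eigenvalues $\sigma_i$ of $\frac{\partial f}{\partial z}$, and arrive at $\xi_\pm = h\sigma_i \pm \sqrt{1+h^2\sigma_i^2}$. Your use of the Vieta relation $\xi_+\xi_- = -1$ to pin down the marginal locus is in fact more explicit than the paper's argument, which simply asserts that $|\lambda_{i\pm}|<1$ has no solution and that the boundary condition $|\lambda_{i\pm}|=1$ forces $h\sigma_i$ onto the imaginary segment $[-i,i]$.
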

\begin{proof}
A solver is A-stable is equivalent to the eigenvalue of the numerical forward has a norm below 1. We calculate the eigenvalue of $\psi$ below.

For the function defined by Eq.~\ref{sup:eq:sim_forward}, the Jacobian is
\begin{equation}
\label{sup:eq:jacobian}
J = \begin{bmatrix}
\frac{\partial \widehat{z_2}}{\partial z_0} & \frac{\partial \widehat{z_2}}{\partial \widehat{v_0}} \\
\\
\frac{\partial \widehat{v_2}}{\partial z_0} & \frac{\partial \widehat{v_2}}{\partial \widehat{v_0}} 
\end{bmatrix} 
=
\begin{bmatrix}
I + h \frac{\partial f}{\partial z} & \frac{h^2}{2} \frac{\partial f}{\partial z} \\
\\
2 \times \frac{\partial f}{\partial z} & h \frac{\partial f}{\partial z} - I
\end{bmatrix}
\end{equation}
We determine the eigenvalue of $J$ by solving the equation
\begin{equation}
\mathrm{det}(J - \lambda I) = \begin{bmatrix}
 h \frac{\partial f}{\partial z} + (1-\lambda) I & \frac{h^2}{2} \frac{\partial f}{\partial z} \\
\\
2 \times \frac{\partial f}{\partial z} & h \frac{\partial f}{\partial z} - (1+\lambda) I
\end{bmatrix}
=0
\end{equation}
It's trivial to check $J$ satisfies conditions for Lemma~\ref{sup:lem:det}.Therefore, we have
\begin{align}
    \mathrm{det}(J - \lambda I)  &= \mathrm{det} \Big[ \Big(h \frac{\partial f}{\partial z} + (1-\lambda) I \Big)  \Big(h \frac{\partial f}{\partial z} - (1+\lambda) I\Big) - \Big(\frac{h^2}{2} \frac{\partial f}{\partial z}\Big) \Big(2 \times \frac{\partial f}{\partial z}\Big) \Big] \\
    &=\mathrm{det} \Big[ -2 \lambda h \frac{\partial f}{\partial z} + (\lambda^2-1)I \Big]
\label{sup:eq:det}
\end{align}
Suppose the eigen-decompostion of $\frac{\partial f}{\partial z}$ can be written as
\begin{equation}
    \frac{\partial f}{\partial z} = \Lambda 
    \begin{bmatrix}
    \sigma_1 \\
    & \sigma_2 \\
    && ... \\
    &&&& \sigma_N
    \end{bmatrix}
    \Lambda^{-1}
\end{equation}
Note that $I = \Lambda I \lambda^{-1}$, hence we have
\begin{align}
    \mathrm{det} (J-\lambda I) &= \mathrm{det}\ \  \Lambda \Bigg\{ -2\lambda h \begin{bmatrix}
    \sigma_1 \\
    & \sigma_2 \\
    && ... \\
    &&&& \sigma_N
    \end{bmatrix} 
    + (\lambda^2-1) I
    \Bigg\}  \Lambda^{-1} \\
    &= \prod_{i=1}^N (\lambda^2 - 2h \sigma_i \lambda -1)
\end{align}
Hence the eigenvalues are 
\begin{equation}
    \lambda_{i \pm} = h \sigma_i \pm \sqrt{ h^2 \sigma_i^2 + 1 }
\end{equation}
A-stability requires $\vert \lambda_{i \pm} \vert < 1, \forall i$, and has no solution.

The critical boundary is $\vert \lambda_{i \pm} \vert = 1$, the solution is: $h \sigma_i$ is 0 or on the imaginary line with norm no larger than 1.
\end{proof}
\newpage
\subsection{Damped ALF}
\label{sup:sec:damp}
\begin{algorithm}
\SetAlgorithmName{Algorithm}{} \\
%\TitleOfAlgo { \textbf{Adam Optimizer}} \\
\textbf{Input} $(\widehat{z_{in}}, \widehat{v_{in}}, s_{in},h) = (\widehat{z_0},\widehat{v_0},s_0,h)$ where $s_0$ is current time, $\widehat{z_0}$ and $\widehat{v_0}$ are correponding values at time $s_0$; stepsize $h$.\\
\textbf{Forward} 
\begin{align}
    s_1 &= s_0 + h/2 \\
    \widehat{z_1} &= \widehat{z_0} + \widehat{v_0}\times h/2 \\
    \widehat{v_1} &= f(\widehat{z_1}, s_1) \\
    \widehat{v_2} &= {\color{blue}{\widehat{v_0} + 2 \eta (\widehat{v_1} - \widehat{v_0})}} \\
    \widehat{z_2} &= \widehat{z_1} + \widehat{v_2} \times h /2 \\
    s_2 &= s_1 + h/2
\end{align}
\textbf{Output} \hspace{28mm} $(\widehat{z_{out}}, \widehat{v_{out}}, s_{out},h) = (\widehat{z_2}, \widehat{v_2}, s_2,h)$
\caption{Forward of $\psi$ in Damped ALF ({\color{blue} $\eta\in (0,1]$ })}
\label{sup:alg:damp_ALF_forward}
\end{algorithm}
\begin{algorithm}[H]
\label{alg:ALF_inverse}
\SetAlgorithmName{Algorithm}{} \\
%\TitleOfAlgo { \textbf{Adam Optimizer}} \\
\textbf{Input} $(\widehat{z_{out}}, \widehat{v_{out}}, s_{out},h) $ where $s_{out}$ is current time, $\widehat{z_{out}}$ and $\widehat{v_{out}}$ are corresponding values at $s_{out}$, $h$ is stepsize.\\
\textbf{Inverse} 
\begin{align}
(\widehat{z_2}, \widehat{v_2}, s_2,h)&=(\widehat{z_{out}}, \widehat{v_{out}}, s_{out},h) \\
 s_1 &= s_2 - h/2 \\
 \widehat{z_1} &= z_2 - \widehat{v_2}\times h/2 \\
 \widehat{v_1} &= f(\widehat{z_1}, s_1) \\
 \widehat{v_0} &= { \color{blue} (\widehat{v_2} - 2 \eta \widehat{v_1})/(1-2 \eta) }\\
 \widehat{z_0} &= \widehat{z_1} - \widehat{v_0} \times h /2 \\
 s_0 &= s_1 - h/2 
\end{align}
\textbf{Output} \hspace{28mm} $(\widehat{z_{in}}, \widehat{v_{in}}, s_{in},h) = (\widehat{z_0},\widehat{v_0},s_0,h)$
\caption{ $\psi^{-1}$ (Inverse of $\psi$) in Damped ALF ({\color{blue} $\eta\in (0,1]$ })}
\end{algorithm}
The main difference between ALF and Damped ALF is marked in blue in Algo.~\ref{sup:alg:damp_ALF_forward}. In ALF, the update of $\widehat{v_2}$ is $\widehat{v_2} = (\widehat{v_1} - \widehat{v_0}) + \widehat{v_1} = 2 (\widehat{v_1} - \widehat{v_0}) + \widehat{v_0}$; while in Damped ALF, the update is scaled by a factor $\eta$ between 0 and 1, so the update is $\widehat{v_2} = 2 \eta (\widehat{v_1} - \widehat{v_0}) + \widehat{v_0}$. When $\eta=1$, Damped ALF reduces to ALF.

Similar to Sec.~\ref{sub:sec:ALF}, we can write the forward as
For simplicity, we can re-write the forward of ALF as
\begin{equation}
\label{sup:eq:sim_forward_damp}
\begin{bmatrix}
\widehat{z_2} \\
\\
\widehat{v_2}
\end{bmatrix} =
\begin{bmatrix}
\widehat{z_0} +  \eta h f(\widehat{z_0}+ \frac{h}{2} \widehat{v_0}, s_0 + \frac{h}{2}) + (1-\eta) h \widehat{v_0}\\
\\
2 \eta f(\widehat{z_0}+ \frac{h}{2} \widehat{v_0}, s_0 + \frac{h}{2}) + (1-2\eta) \widehat{v_0}
\end{bmatrix}
\end{equation}
Similarly, the inverse of ALF can be written as
\begin{equation}
\label{sup:eq:sim_inverse_damp}
\begin{bmatrix}
\widehat{z_0} \\
\\
\widehat{v_0}
\end{bmatrix} =
\begin{bmatrix}
\widehat{z_2} - h \frac{1-\eta}{1-2\eta} \widehat{v_2} + h \frac{\eta}{1-2\eta} f(\widehat{z_2}-\frac{h}{2}\widehat{v_2},s_2 - \frac{h}{2})\\
\\
\frac{1}{1-2\eta} \widehat{v_2} - \frac{2 \eta}{1-2\eta} f(\widehat{z_2}-\frac{h}{2}\widehat{v_2},s_2 - \frac{h}{2})
\end{bmatrix}
\end{equation}

\begin{theorem}
\label{sup:theorem:truncation}
For a single step in Damped ALF with stepsize $h$, the local truncation error of $z$ is $O(h^2)$, and the local truncation errof of $v$ is $O(h)$.
\begin{proof}
The proof is similar to Thm.~\ref{sup:theorem:truncation}. By similar calculations using the Taylor Expansion in Eq.~\ref{sup:eq:taylor_f} and Eq.~\ref{sup:eq:taylor_z}, we have
\begin{align}
\widehat{z_2} - \Tilde{z}(s_0+h) &= (1-\eta)h \widehat{v_0} + h \eta \Big[ f(\widehat{z_0},s_0) + \frac{h}{2}f_t(\widehat{z_0},s_0) + \frac{h \widehat{v_0}}{2}f_z(\widehat{z_0},s_0) \Big] \nonumber \\
&- h\Big[ f(\widehat{z_0},s_0) + \frac{h}{2}f_t{\widehat{z_0},s_0} + \frac{h}{2}f_z(\widehat{z_0},s_0)f(\widehat{z_0},s_0) \Big] + O(h^2) \\
&= (1-\eta)h \Big(\widehat{v_0} - f(\widehat{z_0},s_0)\Big) + \frac{\eta -1}{2}h^2 f_t(\widehat{z_0},s_0) \nonumber \\&+ \frac{h^2}{2}\Big( \eta \widehat{v_0} - f(\widehat{z_0},s_0) \Big) f_z(\widehat{z_0},s_0) + O(h^2)
\label{sup:eq:z_err_damp}
\end{align}
Using Eq.~\ref{sup:eq:ideal_v}, Eq.~\ref{sup:eq:taylor_f} and Eq.~\ref{sup:eq:taylor_z}, we have
\begin{align}
    \Tilde{v_2} - \widehat{v_2} &= (1-2 \eta ) \widehat{v_0} + (2\eta-1)f(\widehat{z_0},s_0) + (1-\eta)h f_t(\widehat{z_0},s_0) \nonumber \\
    &+ \Big( \Tilde{z}(s_0+h) - \widehat{z_0} - \eta h \widehat{v_0} \Big)f_z(\widehat{z_0},s_0) + O(h^2) \\
    &= (2\eta-1) \big[ f(\widehat{z_0},s_0) - \widehat{z_0} \big] + (1-\eta) h f_t(\widehat{z_0},s_0) \nonumber \\
    &+ \eta \Big[ h f(\widehat{z_0},s_0) -  h \widehat{v_0}  \Big] f_z(\widehat{z_0},s_0) + O(h^2)
    \label{sup:eq:v_err_damp}
\end{align}
Note that when $\eta=1$, Eq.~\ref{sup:eq:z_err_damp} reduces to Eq.~\ref{sup:eq:Lz}, and Eq.~\ref{sup:eq:v_err_damp} reduces to Eq.~\ref{sup:eq:Lv}. By initialization, we have $\vert f(\widehat{z_0},s_0) - \widehat{v_0} \vert = 0$ at initial time, hence by induction, the local truncation error for $z$ is $O(h^2)$; the local truncation error for $v$ is $O(h)$ when $\eta < 1$, and is $O(h^2)$ when $\eta = 1$.
\end{proof}
\end{theorem}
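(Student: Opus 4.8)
The plan is to mirror the proof of the undamped case (Theorem~\ref{theorem:truncation}), carefully tracking how the damping factor $\eta$ enters. Write the exact one-step values as $\widetilde{z}(s_0+h)$ and $\widetilde{v}(s_0+h)=f(\widetilde{z}(s_0+h),s_0+h)$, and define the local truncation errors $L_z=\widetilde{z}(s_0+h)-\widehat{z_2}$ and $L_v=\widetilde{v}(s_0+h)-\widehat{v_2}$. The three Taylor expansions already derived for the undamped case remain valid and are exactly what I need: Eq.~\ref{sup:eq:taylor_z} for $\widetilde{z}$, Eq.~\ref{sup:eq:taylor_f} for the midpoint evaluation $f(\widehat{z_0}+\tfrac{h}{2}\widehat{v_0},s_0+\tfrac{h}{2})$, and Eq.~\ref{sup:eq:ideal_v} for $\widetilde{v}$. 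The only new ingredient is the damped forward map Eq.~\ref{sup:eq:sim_forward_damp}, whose $z$- and $v$-components now carry the coefficients $\eta$ and $1-2\eta$.

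First I would expand $\widehat{z_2}$: substitute Eq.~\ref{sup:eq:taylor_f} into the $z$-component of Eq.~\ref{sup:eq:sim_forward_damp}, collect by powers of $h$, and subtract Eq.~\ref{sup:eq:taylor_z}. Introducing the velocity defect $\delta_0:=f(\widehat{z_0},s_0)-\widehat{v_0}$, the leading term of $L_z$ is $(1-\eta)h\,\delta_0$, with the $f_t$ and $f_z$ contributions forming the $O(h^2)$ remainder; this is precisely Eq.~\ref{sup:eq:z_err_damp}. Analogously for $\widehat{v_2}$: substitute Eq.~\ref{sup:eq:taylor_f} into the $v$-component, subtract Eq.~\ref{sup:eq:ideal_v} (itself expanded via Eq.~\ref{sup:eq:taylor_z}), and the $O(1)$ term collapses to $(2\eta-1)\delta_0$, yielding Eq.~\ref{sup:eq:v_err_damp}. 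Bookkeeping the $\eta$-dependent coefficients is the only place requiring care; the natural sanity check is that setting $\eta=1$ sends $(1-\eta)h\delta_0\to 0$ and $(2\eta-1)\delta_0\to\delta_0$, so Eq.~\ref{sup:eq:z_err_damp} and Eq.~\ref{sup:eq:v_err_damp} reduce to the undamped identities Eq.~\ref{sup:eq:Lz} and Eq.~\ref{sup:eq:Lv}.

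It remains to convert these per-step expressions into global orders, which is where the real work lies. By the initialization $\widehat{v_0}=f(\widehat{z_0},s_0)$ of Sec.~\ref{subsec:ALF}, the defect vanishes, $\delta_0=0$, at the start of integration. I would then argue inductively that $\delta$ stays $O(h)$ along the trajectory: assuming $\delta=O(h)$ at the current augmented state, Eq.~\ref{sup:eq:z_err_damp} gives $L_z=(1-\eta)h\cdot O(h)+O(h^2)=O(h^2)$ and Eq.~\ref{sup:eq:v_err_damp} gives $L_v=(2\eta-1)\cdot O(h)+O(h)=O(h)$, and the $O(h)$ control on $L_v$ feeds back to keep the defect $O(h)$ at the next step, closing the induction. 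For $\eta=1$ the coefficient $(1-\eta)$ kills the $z$-error's leading term while the $v$-error inherits the sharper undamped estimate, recovering the $O(h^3)$ and $O(h^2)$ orders respectively.

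The main obstacle is not the Taylor algebra---that is routine once Eq.~\ref{sup:eq:taylor_z}--\ref{sup:eq:ideal_v} are in hand---but the inductive control of the velocity defect $\delta$. Because the local truncation error here is evaluated on the \emph{carried} augmented state $(z_n,v_n)$ rather than on freshly exact inputs, one must verify that the damping does not let $\delta$ grow faster than $O(h)$ from step to step; this is exactly the point where the scheme's design (and the factor $2\eta$ in the $v$-update) matters, and it is the damped analogue of the undamped argument that $L_v=O(h)$ keeps the defect $O(h)$.
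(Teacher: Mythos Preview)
Your proposal is correct and follows essentially the same route as the paper: substitute the Taylor expansions Eq.~\ref{sup:eq:taylor_z}, Eq.~\ref{sup:eq:taylor_f}, Eq.~\ref{sup:eq:ideal_v} into the damped forward map Eq.~\ref{sup:eq:sim_forward_damp}, collect terms to obtain Eq.~\ref{sup:eq:z_err_damp} and Eq.~\ref{sup:eq:v_err_damp}, check the $\eta=1$ reduction, and close by induction from the initialization $\widehat{v_0}=f(\widehat{z_0},s_0)$. Your explicit articulation of the inductive hypothesis on the velocity defect $\delta=O(h)$ is in fact a bit more careful than the paper's brief ``by induction'' remark, but the argument and ingredients are identical.
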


\begin{theorem}[Theorem~\ref{thm:stability} in the main paper]
For Dampled ALF integrator with stepsize $h$, where $\sigma_i$ is the $i$-th eigenvalue of the Jacobian $\frac{\partial f}{\partial z}$, then the solver is A-stable if $\Big \vert 1 + \eta (h \sigma -1 ) \pm \sqrt{ \eta \big[ 2 h \sigma_i + \eta (h \sigma_i -1)^2 \big] } \Big \vert < 1,\  \forall i$.
\end{theorem}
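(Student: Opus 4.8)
The plan is to mirror the proof of the undamped stability theorem in Section~\ref{sup:subsec:stability}, substituting the damped one-step map of Eq.~\ref{sup:eq:sim_forward_damp} for the undamped one. As in the linear stability analysis there, I treat $\frac{\partial f}{\partial z}$ as a constant matrix (the linear test problem), so that a single damped step acts linearly on the augmented state $(\widehat{z},\widehat{v})$ and its amplification matrix is exactly the Jacobian $J$ of Eq.~\ref{sup:eq:sim_forward_damp}. A-stability is then equivalent to requiring every eigenvalue of $J$ to have modulus strictly below $1$, and the whole problem reduces to computing $\det(J-\lambda I)$ and bounding its roots.

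First I would differentiate Eq.~\ref{sup:eq:sim_forward_damp} to obtain
\begin{equation}
J = \begin{bmatrix} I + \eta h \frac{\partial f}{\partial z} & \frac{\eta h^2}{2}\frac{\partial f}{\partial z} + (1-\eta)h I \\ 2\eta \frac{\partial f}{\partial z} & \eta h \frac{\partial f}{\partial z} + (1-2\eta)I \end{bmatrix}.
\end{equation}
Each block is a polynomial in $\frac{\partial f}{\partial z}$, so the two lower blocks commute and Lemma~\ref{sup:lem:det} applies to $J-\lambda I$. I would therefore reduce $\det(J-\lambda I)$ to $\det(AD-BC)$ with the blocks read off above, and then diagonalize through the eigendecomposition of $\frac{\partial f}{\partial z}$ exactly as in the undamped proof, so that the characteristic polynomial factors as a product over the eigenvalues $\sigma_i$.

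The bulk of the work is the algebraic simplification of $AD-BC$. Carrying $\frac{\partial f}{\partial z}\to\sigma_i$ through, the $\eta^2 h^2\sigma_i^2$ contributions from $AD$ and from $BC$ cancel, and after collecting the surviving $h\sigma_i$ terms I expect each factor to collapse to the quadratic
\begin{equation}
\lambda^2 - 2\lambda\big[(1-\eta)+\eta h\sigma_i\big] + (1-2\eta) = 0,
\end{equation}
which serves as a useful sanity check: at $\eta=1$ it must reduce to $\lambda^2 - 2h\sigma_i\lambda - 1 = 0$ from the undamped theorem, and it does. Solving the quadratic gives $\lambda_{i\pm} = \big[(1-\eta)+\eta h\sigma_i\big]\pm\sqrt{\big[(1-\eta)+\eta h\sigma_i\big]^2 - (1-2\eta)}$.

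Finally I would recast these roots into the stated form. The linear part is immediately $(1-\eta)+\eta h\sigma_i = 1+\eta(h\sigma_i-1)$, and the one remaining task is to show the discriminant equals $\eta\big[2h\sigma_i+\eta(h\sigma_i-1)^2\big]$; expanding $\big[1+\eta(h\sigma_i-1)\big]^2-(1-2\eta)$ and cancelling the $\pm 2\eta$ terms does exactly this. Imposing $|\lambda_{i\pm}|<1$ for all $i$ then yields the claimed inequality. I expect the only genuinely delicate step to be this discriminant bookkeeping, since the structural part of the argument is identical to the undamped case and the $\eta=1$ reduction provides a consistency check at each stage.
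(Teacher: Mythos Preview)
Your proposal is correct and follows essentially the same route as the paper: compute the Jacobian of the damped one-step map, apply the block-determinant lemma (Lemma~\ref{sup:lem:det}) since all blocks are polynomials in $\partial f/\partial z$, diagonalize via the eigendecomposition of $\partial f/\partial z$, and read off the eigenvalues of $J$ as roots of a scalar quadratic in each $\sigma_i$. In fact you supply more detail than the paper does---the explicit intermediate quadratic $\lambda^2 - 2\lambda[(1-\eta)+\eta h\sigma_i] + (1-2\eta)=0$, the discriminant rewrite, and the $\eta=1$ sanity check---whereas the paper jumps directly from $\det(J-\lambda I)$ to the final eigenvalue expression.
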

\begin{proof}
The Jacobian of the forward-pass of a single step damped ALF is
\begin{equation}
    J = \begin{bmatrix}
    I + \eta h \frac{\partial f}{\partial z} & (1-\eta) h I + \eta \frac{h^2}{2} \frac{\partial f}{\partial z} \\
    \\
    2 \eta \frac{\partial f}{\partial z} & \eta h \frac{\partial f}{\partial z} + (1-2\eta) I
    \end{bmatrix}
\end{equation}
when $\eta=1$, $J$ reduces to Eq.~\ref{sup:eq:jacobian}. We can determine the eigenvalue of $J$ using similar techniques. Assume the eigenvalues for $\frac{\partial f}{\partial z}$ are $\{\sigma_i\}$, then we have
\begin{align}
    \mathrm{det}(J - \lambda I) &= \mathrm{det} \begin{bmatrix}
    (1-\lambda)I + \eta h \frac{\partial f}{\partial z} & (1-\eta) h I + \eta \frac{h^2}{2} \frac{\partial f}{\partial z} \\
    \\
    2 \eta \frac{\partial f}{\partial z} & \eta h \frac{\partial f}{\partial z} + (1-2\eta - \lambda) I
    \end{bmatrix} \\
    &= \mathrm{det} \Big[ \Big(  (1-\lambda)I + \eta h \frac{\partial f}{\partial z} \Big) \Big( \eta h \frac{\partial f}{\partial z} + (1-2\eta - \lambda) I \Big) \nonumber \\
    &- \Big( (1-\eta) h I + \eta \frac{h^2}{2} \frac{\partial f}{\partial z} \Big) \Big( 2 \eta \frac{\partial f}{\partial z} \Big) \Big] \\
    &= \prod_{i=1}^N \Big[ 1 + \eta (h \sigma_i -1 ) \pm \sqrt{ \eta \big[ 2 h \sigma_i + \eta (h \sigma_i -1)^2 \big] } \Big ]
\end{align}
when $\eta < 1$, it's easy to check that $\Big \vert 1 + \eta (h \sigma_i -1 ) \pm \sqrt{ \eta \big[ 2 h \sigma_i + \eta (h \sigma_i -1)^2 \big] } \Big \vert < 1$ has non-empty solutions for $h\sigma$.
\end{proof}
For a quick validation, we plot the region of A-stability on the imaginary plane for a single eigenvalue in Fig.~\ref{sup:fig:stability}. As $\eta$ increases, the area of stability decreases. When $\eta=1$, the system is no-where A-stable, and the boundary for A-stability is on the imaginary axis $[-i,i]$ where $i$ is the imaginary unit. 
\begin{figure}
\begin{minipage}{0.32\textwidth}
\includegraphics[width=\linewidth]{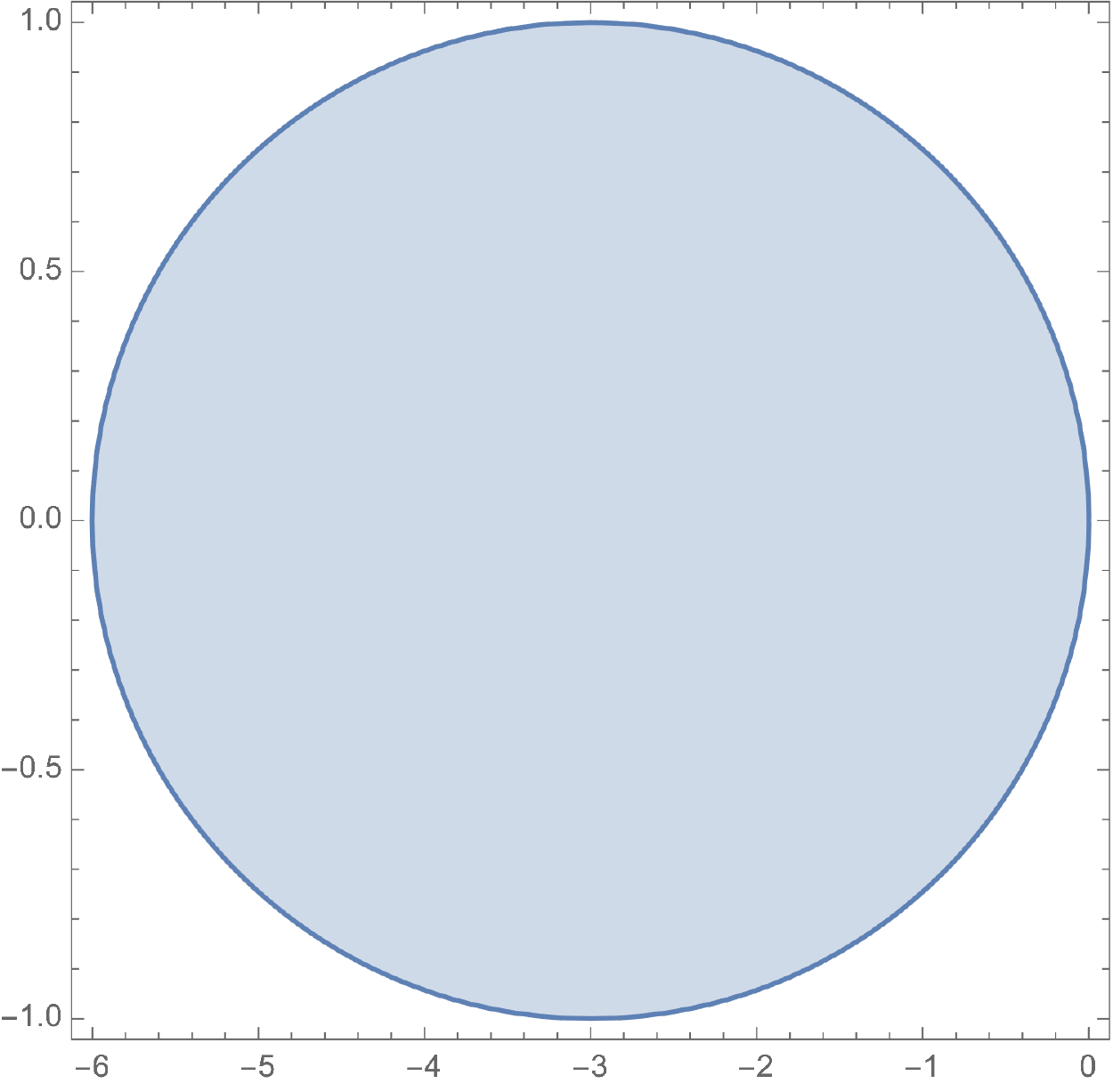}
\end{minipage}
\begin{minipage}{0.32\textwidth}
\includegraphics[width=\linewidth]{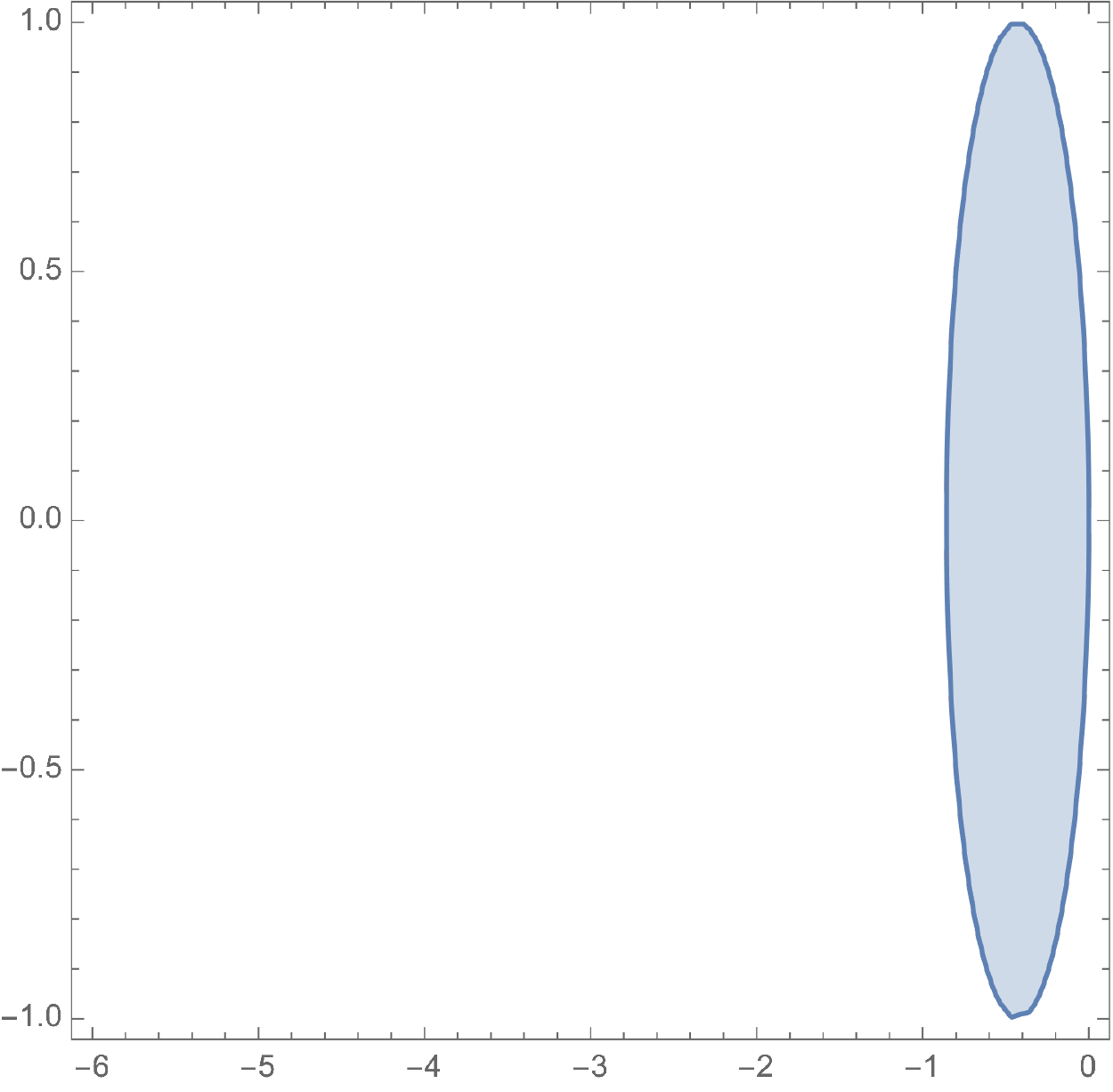}
\end{minipage}
\begin{minipage}{0.32\textwidth}
\includegraphics[width=\linewidth]{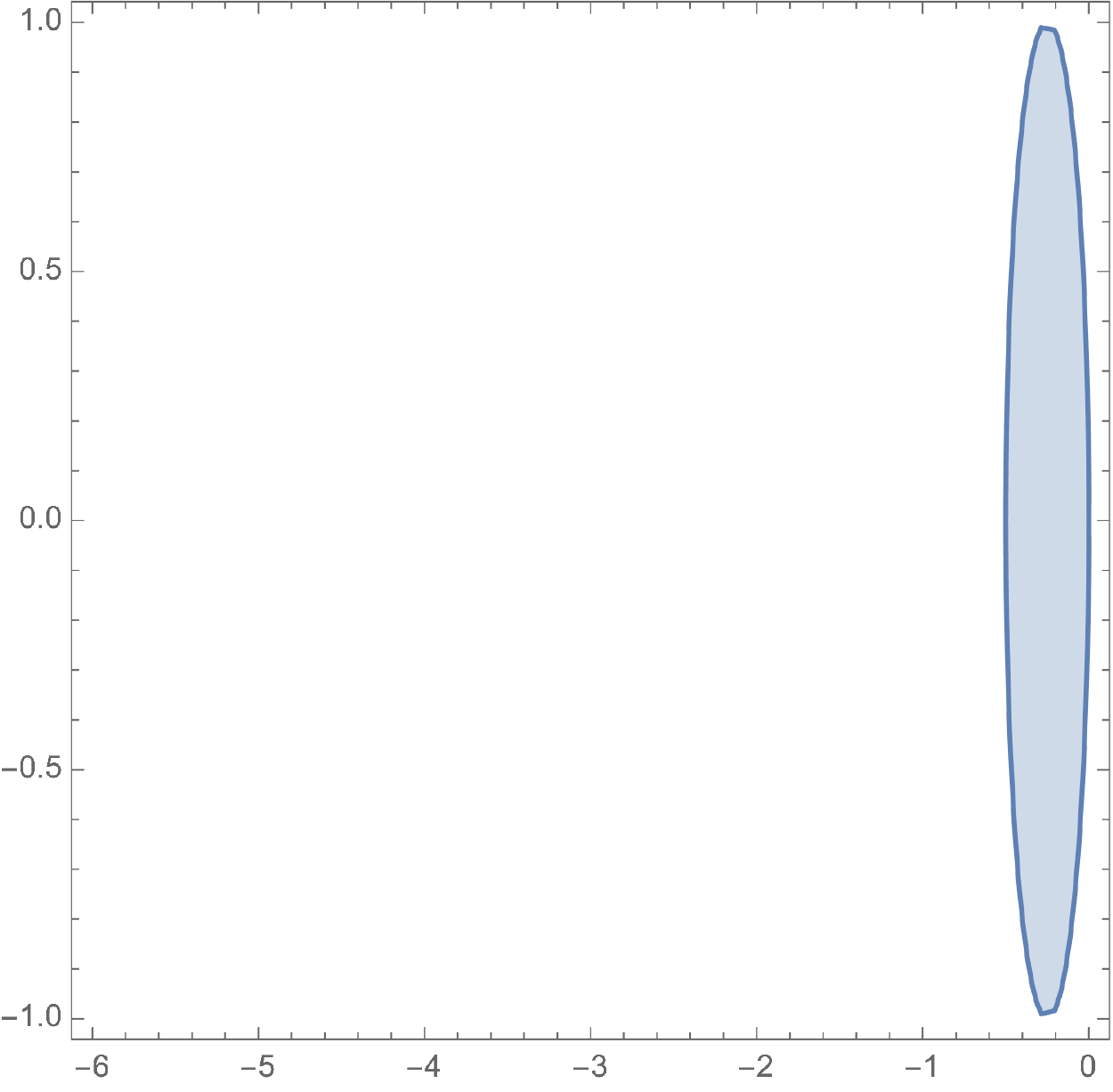}
\end{minipage}
\caption{Region of A-stability for eigenvalue on the imaginary plane for damped ALF. From left to right, the region of stability for $\eta=0.25$, $\eta=0.7$,$\eta=0.8$ respectively. As $\eta$ increases to 1, the area of stability region decreases.}
\label{sup:fig:stability}
\end{figure}
\newpage
\section{Experimental Details}
\subsection{Image Recognition}
\subsubsection{Experiment on Cifar10}
We directly modify a ResNet18 into a Neural ODE, where the forward of a residual block ($y=x+f(x)$) and the forward of an ODE block ($y=x+\int_0^T f(z,t)dt$ where $T=1$) share the same parameterization $f$, hence they have the same number of parameters. Our experiment is based on the official implementation by \citet{zhuang2020adaptive} and an open-source repository \citep{cifar10}.

All models are trained with SGD optimizer for 90 epochs, with an initial learning rate of 0.01, and decayed by a factor of 10 at 30th epoch and 60th epoch respectively. Training scheme is the same for all models (ResNet, Neural ODE trained with adjoint, naive, ACA and MALI). For ACA, we follow the settings in \citep{zhuang2020adaptive} and use the official implementation \textit{torch\_ACA} \footnote{\url{https://github.com/juntang-zhuang/torch_ACA}}, 
and use a Heun-Euler solver with $rtol=10^{-1}, atol=10^{-2}$ during training. For MALI, we use an adaptive version and set $rtol=10^{-1}, atol=10^{-2}$. For the naive and adjoint method, we use the default \textit{Dopri5} solver from the \textit{torchdiffeq}\footnote{\url{https://github.com/rtqichen/torchdiffeq}} package with $\text{rtol}=\text{atol}=10^{-5}$. We train all models for 5 independent runs, and report the mean and standard deviation across runs. 

\subsubsection{Experiments on ImageNet}
\paragraph{Training scheme}
We conduct experiments on ImageNet with ResNet18 and Neural-ODE18. All models are trained on 4 GTX-1080Ti GPUs with a batchsize of 256. All models are trained for 80 epochs, with an initial learning rate of 0.1, and decayed by a factor of 10 at 30th and 60th epoch. Note that due to the large size input $256\times256$, the naive method and ACA requires a huge memory, and is infeasible to train. MALI and the adjoint method requires a constant memory hence is suitable for large-scale experiments. For both MALI and the adjoint menthod, we use a fixed stepsize of 0.25, and integrates from 0 to $T=1$. As shown in Table.~\ref{table:imagenet} in the main paper, a stepsize of 0.25 is sufficiently small to train a meaningful continuous model that is robust to discretization scheme. 

\begin{figure}


\begin{subfigure}{0.5\textwidth}
\includegraphics[width=\linewidth]{figures/imagenet_train.png}
\caption{Training curve on ImageNet.}
\end{subfigure}
\begin{subfigure}{0.5\textwidth}
\includegraphics[width=\linewidth]{figures/imagenet_test.png}
\caption{Validation curve on ImageNet.}
\end{subfigure}
\caption{Results on ImageNet.}
\end{figure}

\paragraph{Invariance to discretization scheme}
To test the influence of discretization scheme, we test our Neural ODE with different solvers \textit{without} re-training. For fixed-stepsize solvers, we tested various step sizes including $\{0.1, 0.15,0.25, 0.5, 1.0\}$; for adaptive solvers, we set $\text{rtol=0.1, atol=0.01}$ for MALI and Heun-Euler method, and set $\text{rtol}=10^{-2},\text{atol}=10^{-3}$ for RK23 solver, and set $\text{rtol}=10^{-4},\text{atol}=10^{-5}$ for Dopri5 solver. As shown in Table.~\ref{table:imagenet}, Neural ODE trained with MALI is robust to discretization scheme, and MALI significantly outperforms the adjoint method in terms of accuracy (70\% v.s. 63\% top-1 accuracy on the validation dataset). An interesting finding is that when trained with MALI which is a second-order solver, and tested with higher-order solver (e.g. RK4), our Neural ODE achieves 70.21\% top-1 accuracy, which is higher than both the same solver during training (MALI, 69.59\% accuracy) and the ResNet18 (70.09\% accuracy).

Furthermore, many papers claim ResNet to be an approximation for an ODE \citep{lu2018beyond}. However, \citet{queiruga2020continuous} argues that many numerical discretizations fail to be meaningful dynamical systems, while our experiments demonstrate that our model is continuous hence invariant to discretization schemes. %A test for this is to perform inference using different ODE solver without retraining. As shown in Table~\ref{table:imagenet}, when tested with different solvers, ResNet often generate a very low accuracy (0.1\% is close to random guess on ImageNet with 1000 classes). Even worse than this, when using adaptive ODE solvers at high precision, ResNet often encounters a stiff system and runs out of time. This implies the transform learned by ResNet might be less smooth than Neural ODE, and ResNet is not a meaningful discretization of an ODE.

\paragraph{Adversarial robustness}
Besides the high accuracy and robustness to discretization scheme, another advantage of Neural ODE is the robustness to adversarial attack. The adversary robustness of Neural ODE is extensively studied in \citep{hanshu2019robustness}, but not only validated on small-scale datasets such as Cifar10. To our knowledge, our method is the first to enable effectuve training of Neural ODE on large-scale datasets such as ImageNet and achieve a high accuracy, and we are the first to validate the robustness of Neural ODE on ImageNet. We use the \textit{advertorch} \footnote{\url{https://github.com/BorealisAI/advertorch}} toolbox to perform adversarial attack. We test the performance of ResNet and Neural ODE under FGSM attack. To be more convincing, we conduct experiment on the pretrained ResNet18 provided by the official PyTorch website \footnote{\url{https://pytorch.org/docs/stable/torchvision/models.html}}. Since Neural ODE is invariant to discretization scheme, it's possible to derive the gradient for attack using one ODE solver, and inference on the perturbed image using another solver. As summarized in Table.~\ref{table:adversarial}, Neural ODE consistently achieves a higher accuracy than ResNet under the same attack.

\subsection{Time series modeling}
We conduct experiments on Latent-ODE models \citep{rubanova2019latent} and Neural CDE (controlled differential equation) \citep{kidger2020hey}. For all experiments, we use the official implementation, and only replace the solver with MALI. The latent-ODE model is trained on the \textit{Mujoco} dataset processed with code provided by the official implementation, and we experiment with different ratios (10\%,20\%,50\%) of training data as described in \citep{rubanova2019latent}. All models are trained for 300 epochs with Adamax optimizer, with an initial learning rate of 0.01 and scaled by 0.999 for each epoch. For the Neural CDE model, for the naive method, ACA and MALI, we perform 5 independent runs and report the mean value and standard deviation; results for the adjoint and seminorm adjoint are from \citep{kidger2020hey}. For Neural CDE, we use MALI with ALF solver with a fixed stepsize of 0.25, and train the model for 100 epochs with an initial learning rate of 0.004.

\subsection{Continuous generative models}
\label{sup:sec:ffjord}
\subsubsection{Training details}
Our experiment is based on the official implementation of \citep{finlay2020train}, with the only difference in ODE solver. For a fair comparison, we only use MALI for training, and use \textit{Dopri5} solver from \textit{torchdiffeq} package \citep{chen2018neural} with $\text{rtol}=\text{atol}=10^{-5}$. For MALI, we use adaptive ALF solver with $rtol=10^{-2}, atol=10^{-3}$, and use an initial stepsize of 0.25. Integration time is from 0 to 1. 

On MNIST and CIFAR dataset, we set the regularization coefficients for kinetic energy and Frobenius norm of the derivative function as 0.05. We train the model for 50 epochs with an initial learning rate of 0.001.

%On ImageNet64 dataset, we set the regularization coefficients for kinetic energy and Frobenius norm of the derivative function as 0.01. Other hyperparameters are set the same as on MNIST dataset. We train the model for 4 epochs.

\subsubsection{Addtional results}
We show generated examples on MNIST dataset in Fig.~\ref{sup:fig:mnist}, results for Cifar10 dataset in Fig.~\ref{sup:fig:cifar}, and results for ImageNet64 in Fig.~\ref{sup:fig:imagenet}.
\begin{figure}
\begin{subfigure}{0.5\textwidth}
\centering
\includegraphics[width=\linewidth]{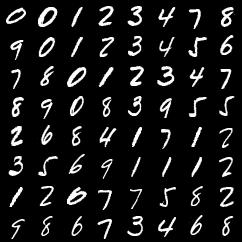}
\caption{Real samples from MNIST dataset.}
\end{subfigure}
\begin{subfigure}{0.5\textwidth}
\centering
\includegraphics[width=\linewidth]{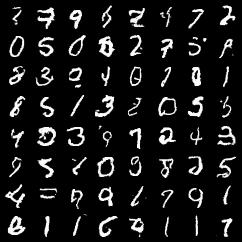}
\caption{Generated samples from FFJORD.}
\end{subfigure}
\caption{Results on MNIST dataset.}
\label{sup:fig:mnist}
\end{figure}

\begin{figure}
\begin{subfigure}{0.5\textwidth}
\centering
\includegraphics[width=\linewidth]{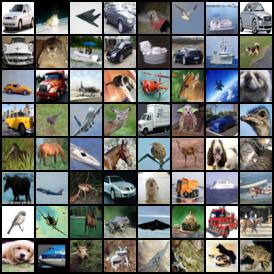}
\caption{Real samples from CIFAR10 dataset.}
\end{subfigure}
\begin{subfigure}{0.5\textwidth}
\centering
\includegraphics[width=\linewidth]{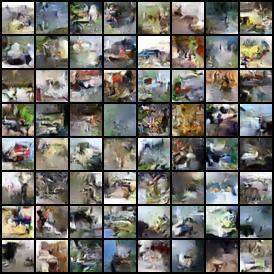}
\caption{Generated samples from FFJORD.}
\end{subfigure}
\caption{Results on Cifar10 dataset.}
\label{sup:fig:cifar}
\end{figure}

\begin{figure}
\begin{subfigure}{0.5\textwidth}
\centering
\includegraphics[width=\linewidth]{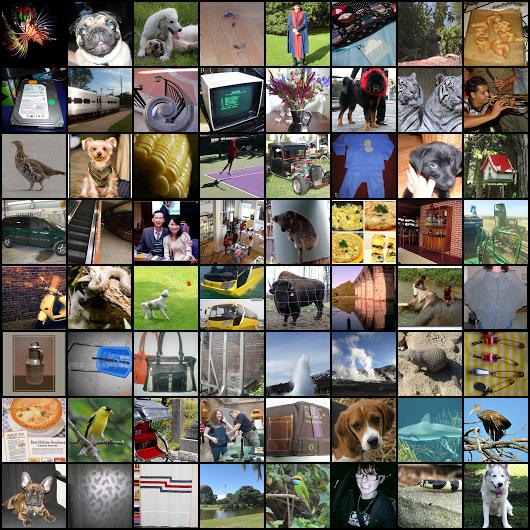}
\caption{Real samples from ImageNet64 dataset.}
\end{subfigure}
\begin{subfigure}{0.5\textwidth}
\centering
\includegraphics[width=\linewidth]{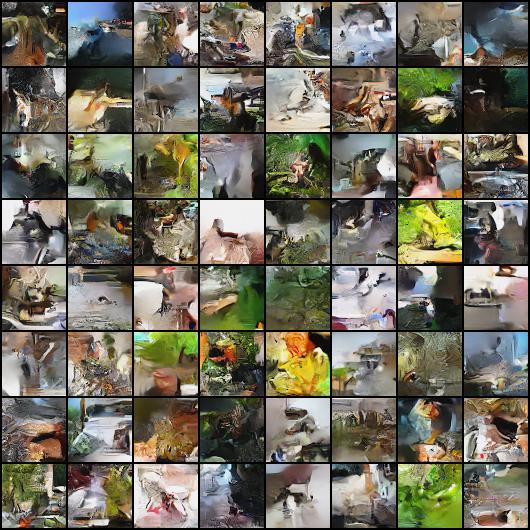}
\caption{Generated samples from FFJORD.}
\end{subfigure}
\caption{Results on ImageNet64 dataset.}
\label{sup:fig:imagenet}
\end{figure}

\subsection{Error in gradient estimation for toy examples when $t<1$}
We plot the error in gradient estimation for the toy example defined by Eq.6 in the main paper in Fig.~\ref{sup:fig:toy}. Note that the integration time $T$ is set as smaller than 1, while the main paper is larger than 20. We observe the same results, MALI and ACA generate smaller error than the adjoint and the naive method.
\begin{figure}
\begin{subfigure}{0.5\textwidth}
\centering
\includegraphics[width=\linewidth]{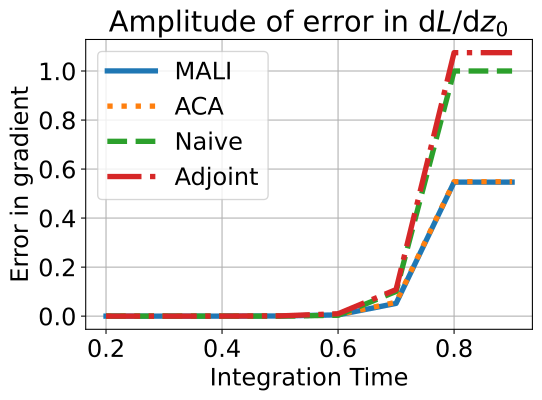}
\caption{Error in the estimation of gradient \textit{w.r.t} initial condition.}
\end{subfigure}
\begin{subfigure}{0.5\textwidth}
\centering
\includegraphics[width=\linewidth]{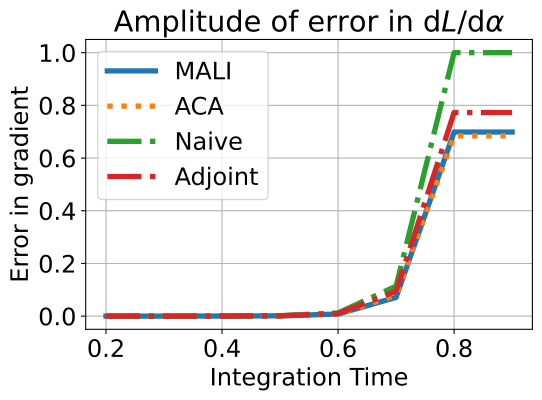}
\caption{Error in the estimation of gradient \textit{w.r.t} parameter $\alpha$.}
\end{subfigure}
\caption{Comparison of error in gradient estimation for the toy example by Eq.6 of the main paper, when $t<1$.}
\label{sup:fig:toy}
\end{figure}

\subsection{Results of damped MALI }
For all experiments in the main paper, we set $\eta=1$ and did not use damping. For completeness, we experimented with damped MALI using different values of $\eta$. As shown in Table.~\ref{sup:table:eta}, MALI is robust to different $\eta$ values.
% Please add the following required packages to your document preamble:
% \usepackage{multirow}
\begin{table}[]
\caption{Results of damped MALI with different $\eta$ values. We report the test accuracy of Neural CDE on Speech Command dataset, and the test MSE of latent-ODE on Mujoco data.}
\label{sup:table:eta}
\scalebox{0.9}{
\begin{tabular}{c|c|cccc}
\hline
\multicolumn{2}{c|}{$\eta$}                                                                                            & 1.0                         & 0.95                        & 0.9                         & 0.85                         \\ \hline
\multicolumn{2}{c|}{\begin{tabular}[c]{@{}c@{}}Test Accuracy\\ on Speech Commands\\ (Higher is better)\end{tabular}}                & $93.7 \pm 0.3$ & $93.7 \pm 0.1$ & $93.5 \pm 0.2$ & $93.7 \pm  0.3$ \\ \hline
\multirow{2}{*}{\begin{tabular}[c]{@{}c@{}}Test MSE of latent ODE \\ on Mujoco (Lower is better)\end{tabular}} & 10\% training data & 0.35                        & 0.36                        & 0.33                        & 0.33                         \\ \cline{2-6} 
                                                                                                               & 20\% training data & 0.27                        & 0.25                        & 0.26                        & 0.27                         \\ \hline
\end{tabular}
}
\end{table}
\end{document}